\newenvironment{enumerate*}%
  {\vspace{-2ex} \begin{enumerate} %
     \setlength{\itemsep}{-1ex} \setlength{\parsep}{0pt}}%
  {\end{enumerate}}
\newenvironment{itemize*}%
  {\vspace{-2ex} \begin{itemize} %
     \setlength{\itemsep}{-1ex} \setlength{\parsep}{0pt}}%
  {\end{itemize}}
\newenvironment{description*}%
  {\vspace{-2ex} \begin{description} %
     \setlength{\itemsep}{-1ex} \setlength{\parsep}{0pt}}%
  {\end{description}}
\DeclareMathOperator*{\E}{\mathbb{E}}
\newcommand{\vp}{\mathbf{p}}
\newcommand{\WSWM}{\texttt{\upshape WSWM}}
\newcommand{\mwu}{\texttt{MWU}}
\newcommand{\hedge}{\texttt{Hedge}}
\newcommand{\elf}{\texttt{ELF}}
\newcommand{\elfx}{\texttt{ELF-X}}
\newcommand{\vw}{\mathbf{w}}
\newcommand{\bpi}{\boldsymbol{\pi}}
\newcommand{\bp}{\bar{p}}
\newcommand{\tpi}{\widetilde{\pi}}
\newcommand{\tbpi}{\boldsymbol{\widetilde{\pi}}}
\newcommand{\hell}{\hat{\ell}}
\newcommand{\tL}{L}
\newcommand{\1}{\mathbbm{1}}
\newcommand{\Bern}{\mathtt{Bern}}
\newcommand{\wsu}{\texttt{WSU}}
\newcommand{\wsuagg}{\texttt{WSU-Aggr}}
\newcommand{\mwuagg}{\texttt{MWU-Aggr}}
\newcommand{\elfxagg}{\texttt{ELF-X-Aggr}}
\newcommand{\wsux}{\texttt{WSU-UX}}
\newcommand\numberthis{\addtocounter{equation}{1}\tag{\theequation}}
\newtheorem{theorem}{Theorem}[section]
\newtheorem{lemma}{Lemma}[section]
\newtheorem{definition}{Definition}[section]
\newtheorem{example}{Example}[section]
\begin{document}

\title{No-Regret and Incentive-Compatible Online Learning}
\author{
    Rupert Freeman\thanks{Microsoft Research NYC, \texttt{rupert.freeman@microsoft.com}}
    \and 
    David M. Pennock\thanks{DIMACS Center, Rutgers University, \texttt{dpennock@dimacs.rutgers.edu}. Part of the work was conducted when the author was a researcher at Microsoft Research NYC.}
    \and 
    Chara Podimata\thanks{Harvard University, \texttt{podimata@g.harvard.edu}. Part of the work was conducted when the author was an intern at Microsoft Research NYC. The author is supported in part under grant No. CCF-1718549 of the National Science Foundation and the Harvard Data Science Initiative.}
    \and 
    Jennifer Wortman Vaughan\thanks{Microsoft Research NYC, \texttt{jenn@microsoft.com}}
}

\maketitle

\begin{abstract}
 We study online learning settings in which experts act strategically to maximize their influence on the learning algorithm's predictions by potentially misreporting their beliefs about a sequence of binary events. Our goal is twofold. First, we want the learning algorithm to be no-regret with respect to the best fixed expert in hindsight. Second, we want incentive compatibility, a guarantee that each expert's best strategy is to report his true beliefs about the realization of each event.  To achieve this goal, we build on the literature on wagering mechanisms, a type of multi-agent scoring rule. We provide algorithms that achieve no regret and incentive compatibility for myopic experts for both the full and partial information settings.  In experiments on datasets from FiveThirtyEight, our algorithms have regret comparable to classic no-regret algorithms, which are not incentive-compatible. Finally, we identify an incentive-compatible algorithm for forward-looking strategic agents that exhibits diminishing regret in practice.
\end{abstract}

\section{Introduction}

We study an online learning setting in which a learner makes predictions about a sequence of $T$ binary events~\citep{V90,LW94,CBFHHSW97,FS97,V98,ACBFS02}. The learner has access to a pool of $K$ experts, each with beliefs about the likelihood of each event occurring. The standard goal of the learner is to output a sequence of predictions almost as accurate as those of the best fixed expert in hindsight. Such a learner is said to have no regret.

But what if the experts that the learner consults are strategic agents, capable of reporting predictions that do not represent their true beliefs? As pointed out by~\citet{RS17}, when the learner is not only making predictions but also (implicitly or explicitly) evaluating the experts, experts might have incentive to misreport. The Good Judgment Project,\footnote{\url{https://goodjudgment.com}} a competitor in IARPA's Aggregative Contingent Estimation geopolitical forecasting contest, scored individual forecasters and rewarded the top 2\%---dubbed ``Superforecasters''~\cite{TG15}---with perks such as paid conference travel; some are now employed by a spinoff company. Similarly, the website FiveThirtyEight\footnote{\url{https://fivethirtyeight.com/}} not only predicts election results by aggregating different pollsters, but also publicly scores the pollsters, in a way that correlates with the amount of influence that the pollsters have over the FiveThirtyEight aggregate. It is natural to expect that forecasters might respond to the competitive incentive structure in these settings by seeking to maximize the influence that they exert on the learner's prediction.

When an online learning algorithm is designed in such a way that experts are motivated to report their true beliefs, we say it is \emph{incentive-compatible}.  Incentive compatibility is desirable for several reasons. First, when experts do not report truthfully, the learner's prediction may be harmed. Second, learning algorithms that fail incentive compatibility place an additional layer of cognitive burden on the experts, who must now reason about the details of the algorithm and other experts' reports and beliefs in order to decide how to act optimally. To our knowledge, the standard multiplicative-weights-type algorithms fail incentive compatibility, in the sense that experts can sometimes achieve a greater influence on the algorithm's prediction by misreporting their beliefs; we illustrate this explicitly through manipulation examples. Our goal in this work is to design incentive-compatible online learning algorithms without compromising on the quality of the algorithm's predictions. That is, we seek algorithms that are both incentive-compatible and no-regret, for both the full and partial (bandit) information settings.

Towards this goal, we show a novel connection between online learning and \emph{wagering mechanisms}~\citep{LLVCRSP08,LLVCRSP15}, a type of multi-agent scoring rule that allows a principal to elicit the beliefs of a group of agents without taking on financial risk. 
Using this connection, we construct online learning algorithms that are incentive-compatible and incur sublinear regret. For the full information setting, we introduce Weighted-Score Update ($\wsu$), which yields regret $O(\sqrt{T \ln K})$, matching the optimal regret achievable for general loss functions, even without incentive guarantees. For the partial information setting, we introduce Weighted-Score Update with Uniform Exploration ($\wsux$), which achieves regret $O ( T^{2/3} (K \ln K)^{1/3})$. 

We focus primarily on experts that strategize only about their influence at the next timestep. However, we obtain a partial extension for forward-looking experts. Building on a mechanism that was proposed for forecasting competitions~\cite{WFVPK18}, we identify an algorithm, $\elfx$, for the full information setting that is incentive-compatible and achieves diminishing regret in simulations. 

Our theoretical results are supported by experiments on data gathered from an online prediction contest on FiveThirtyEight. Our algorithms achieve regret almost identical to the classic (and not incentive-compatible) Multiplicative Weights Update ($\mwu$)~\cite{FS97} and $\texttt{EXP3}$~\cite{ACBFS02} algorithms in the full and partial information settings respectively, though $\wsu$ falls short of the optimal regret achieved by $\hedge$ for quadratic loss. 

\paragraph{Related Work.} Other work has drawn connections between online learning and incentive-compatible forecasting, particularly in the context of prediction markets~\citep{ACV13, AF11, FDR12, HS14}. Our work is most closely related to that of \citet{RS17}, but differs from theirs in several important ways.  Most crucially, Roughgarden and Schrijvers consider algorithms that maintain \emph{unnormalized} weights over the experts, and they assume that an expert's incentives are only affected by these weights. In our work, incentives are tied to the expert's \emph{normalized} weight---that is, his probability of being selected by the learning algorithm. We argue that normalized weights better reflect experts' incentives in reality, since reputation tends to be relative more than absolute; put another way, doubling the unnormalized weight of every expert should not increase an expert's utility, since his influence over the learner's prediction remains the same. Under Roughgarden and Schrijvers' model, the design problem is fairly simple when the loss function is a proper loss~\cite{RW09}---that is, one that can be elicited by a proper scoring rule~\citep{S71,GR07}, such as the quadratic loss function---and can be solved with a multiplicative weights algorithm. Because of this, they focus primarily on the absolute loss function, which is not a proper loss.  
In contrast, in our model, the design problem is nontrivial even for these ``easier'' proper loss functions.

Conceptually, our work builds on work by~\citet{WFVPK18}, who use competitive scoring rules---a subclass of wagering mechanisms---to design incentive-compatible forecasting competitions. We discuss their work further in Section~\ref{sec:elf}.
Our work also has connections with the work of \citet{OP16}, who introduce a class of \emph{coin-betting} algorithms for online learning. Although \citet{OP16} do not address incentives and do not make a connection with the wagering mechanisms literature, our $\wsu$ algorithm can be interpreted as a coin-betting algorithm.\footnote{In the language of coin betting, in $\wsu$ experts wager an $\eta$ fraction of their wealth on the positive realization of the event, and the actual outcome of each coin flip is the expert's loss minus the weighted average loss of all other experts.} 

\section{Model and Preliminaries}
\label{sec:model}

We consider a setting in which a learner interacts with a set of $K$ experts, each making probabilistic predictions about a sequence of $T$ binary outcomes.\footnote{We focus on binary outcomes to simplify the presentation of our results, but our techniques could be applied more broadly.}  At each round $t \in [T]$, each expert $i \in [K]$ has a private belief $b_{i,t} \in [0,1]$, unknown to the learner, about the outcome for that round.  Both the experts' beliefs and the sequence of outcomes may be chosen arbitrarily, and potentially adversarially.

In the full information setting, each expert reports his prediction $p_{i,t} \in [0,1]$ to the learner. The learner then chooses her own prediction $\bp_t \in [0,1]$ and observes the outcome realization $r_t \in \{0,1\}$. Finally, the learner and the experts incur losses $\ell_t = \ell(\bp_t,r_t)$ and $\ell_{i,t} = \ell(p_{i,t},r_t), \forall i \in [K]$, where $\ell: [0,1] \times \{ 0,1 \} \to [0,1]$ is a bounded loss function.\footnote{The loss function taking values in $[0,1]$ is without loss of generality since any bounded loss function could be scaled.} As is common in the literature, we restrict our attention to algorithms in which the learner maintains a timestep-specific probability distribution $\bpi_t = (\pi_{1,t}, \dots, \pi_{K,t})$ over the experts, and chooses her prediction $\bp_t$ according to this distribution. Unless specified, this means that the learner predicts $\bp_t = p_{i,t}$ with probability $\pi_{i,t}$; some of our results additionally apply when $\bp_t = \sum_{i \in [K]} \pi_{i,t}p_{i,t}$.

Under partial information, the protocol remains the same except that the learner is explicitly restricted to choosing a single expert $I_t$ on each round $t$ (according to distribution $\bpi_t$) and does not observe the predictions of other experts.

The goal of the learner is twofold. First, she wishes to incur a total loss that is not too much worse than the loss of the best fixed expert in hindsight. This is captured using the classic notion of \emph{regret}, given by
\[
R = \E \left[ \sum_{t \in [T]} \ell_t - \min_{i \in [K]} \sum_{t \in [T]} \ell_{i,t} \right],
 \]
where the expectation is taken with respect to randomness in the learner's choice of $\bp_t$.

No-regret algorithms have been proposed in both the full and partial information settings. Many, such as $\hedge$ \citep{FS97} and $\mwu$ \citep{AHK12}, achieve regret of $O(\sqrt{T \ln K})$ for general loss functions by maintaining unnormalized weights $w_{i,t}$ for each expert $i$ that are updated multiplicatively at each timestep. $\hedge$ uses the update rule $w_{i,t+1}=w_{i,t}\exp(-\eta \ell_{i,t})$, while $\mwu$ uses $w_{i,t+1}=w_{i,t}(1 - \eta \ell_{i,t})$ for appropriately chosen values of $\eta$. These weights are then normalized to arrive at the distribution $\bpi_t$. For the case of exp-concave loss functions, such as the quadratic loss, \citet{KW99} showed that by aggregating experts' predictions and tuning $\eta$ appropriately, $\hedge$ can achieve regret $O(\ln K)$.  

For the partial information setting, the $\texttt{EXP3}$ algorithm of \citet{ACBFS02} achieves a regret of $O(\sqrt{T K \ln K})$. \texttt{EXP3} maintains a set of expert weights similar to those of $\hedge$. However, since the learner can only observe the prediction of the chosen expert, she uses an unbiased estimator $\hell_{i,t}$ of each expert $i$'s loss in her updates in place of $\ell_{i,t}$. The update rule then becomes $w_{i,t+1} = w_{i,t} \exp(-\eta \hell_{i,t})$.

The second goal of the learner is to incentivize experts to truthfully report their private beliefs. In our model, at each timestep $t$, each expert $i$ chooses his report $p_{i,t}$ strategically to maximize the probability $\pi_{i,t+1}$ that he is chosen at timestep $t+1$. An algorithm is \emph{incentive-compatible} if experts maximize this probability by reporting $p_{i,t} = b_{i,t}$, irrespective of the reports of the other experts. 

\begin{definition}[Incentive Compatibility]\label{def:online-ic}
An online learning algorithm is \emph{incentive-compatible} if for every timestep $t \in [T]$, every expert $i$ with belief $b_{i,t}$, every report $p_{i,t}$, every vector of reports of the other experts $\vp_{-i,t}$, and every history of reports $(\vp_{t'})_{t'<t}$ and outcomes $(r_{t'})_{t'<t}$, 
\begin{align*}
&\E_{r_t \sim \Bern(b_{i,t})} [ \pi_{i,t+1} |   \left(b_{i,t}, \vp_{-i,t}\right),  r_t, (r_{t'})_{t'<t}, (\vp_{t'})_{t'<t} ]\\
&\geq \E_{r_t \sim \Bern(b_{i,t})}\left[ \pi_{i,t+1} |   \left(p_{i,t}, \vp_{-i,t}\right),  r_t, (r_{t'})_{t'<t}, (\vp_{t'})_{t'<t} \right] .
\end{align*}
where by $r \sim \Bern(b)$ we denote a random variable $r$ taking value $1$ with probability $b$ and $0$ otherwise. 
\end{definition}

Incentive compatibility guarantees that any regret bounds apply not only with respect to the reports of the experts, but also with respect to their beliefs. This notion of regret is often called \emph{strategic regret}, and in general may be higher or lower than standard regret. For an incentive-compatible algorithm, the two notions coincide.

To achieve incentive compatibility, we restrict attention to proper loss functions~\cite{RW09}, referred to in the forecasting literature as proper scoring rules~\citep{M56,S71,GR07}. 
\begin{definition}
	\label{def:proper-loss}
	A loss function $\ell$ is said to be \emph{proper} if 
	\[ \mathbb{E}_{r \sim \Bern(b)}[\ell (p,r)] \ge \mathbb{E}_{r \sim \Bern(b)}[\ell (b,r)], \forall p \neq b . \]
\end{definition}

Restricting attention to proper loss functions, we are guaranteed that an expert who cares only about his expected loss would truthfully report his beliefs. However, this does not apply for experts who care about their probability of being chosen by the learner, as in our setting. Indeed, known online learning algorithms fail to be incentive-compatible even for proper loss functions. We illustrate this in the following example for $\mwu$ with the (proper) quadratic loss function $\ell(p,r)=(p-r)^2$. Here the normalization of weights by the factor  $\sum_{j \in [K]} w_{j,t}$, which depends on both $p_{i,t}$ and $r_t$, can create incentives for agent $i$ to deviate. We note that a similar counterexample can be proved for Gradient Descent too, and we include it in Appendix~\ref{sec:gd-counterexample}.

\begin{example}\label{ex:mwu-not-ic}
Let $\ell(p,r)=(p-r)^2$. Under standard initialization for $\mwu$, $w_{i,1}=1$ for all $i \in [K]$. Suppose that $b_{1,1}=0.5$ and $p_{i,1}=0$ for all $i \in \{ 2, \ldots, K \}$. Then $\mathbb{E}[\pi_{1,2}]$, the expected probability that expert 1 is chosen at time 2 under $\mwu$ with respect to his own beliefs, is
\begin{align*} 
	0.5 \left( \frac{1-\eta (1-p_{1,1})^2}{K-\eta (1-p_{1,1})^2-\eta (K\!-\!1)} \right) + 0.5 \left( \frac{1\!-\!\eta p_{1,1}^2}{K\!-\!\eta p_{1,1}^2} \right)\!.
\end{align*}
For $K \geq 3$ and $T \geq 9\ln(3)$, the denominator in the first term is less than the denominator in the second term, independent of $p_{1,1}$. The derivative of $\mathbb{E}[\pi_{1,2}]$ with respect to $p_{1,1}$ is therefore strictly positive at $0.5$, implying that expert 1 maximizes his utility by reporting some $p_{1,1}>0.5$.
\end{example}

Thus, unlike in the setting of \citet{RS17}, using a proper loss function with a standard algorithm is not enough, and new algorithmic ideas are needed. To derive our algorithms, we draw a connection between online learning and \emph{wagering mechanisms}, one-shot elicitation mechanisms that allow experts to bet on the quality of their predictions relative to others.  In the one-shot wagering setting introduced by \citet{LLVCRSP08}, each agent $i \in [K]$ holds a belief $b_i \in [0,1]$ about the likelihood of an event. Agent $i$ reports a probability $p_i$ and a wager $w_i \ge 0$. A wagering mechanism, $\Gamma$, maps the reports $\vp = (p_1, \dots, p_K)$, wagers $\vw = (w_1, \dots, w_K)$, and the realization $r$ of the binary event to payments $\Gamma_i(\vp, \vw, r)$ for each agent $i$. The purpose of the wager is to allow each agent to set a maximum allowable loss, which is captured by imposing the constraint that $\Gamma_i(\vp, \vw, r) \ge 0, \forall i \in [K]$.  We restrict our attention to \emph{budget-balanced} wagering mechanisms for which $\sum_{i \in [K]} \Gamma_i(\vp,\vw, r)=\sum_{i \in [K]} w_i$.

A wagering mechanism $\Gamma$ is said to be \emph{incentive-compatible} if for every agent $i \in [K]$ with belief $b_i \in [0,1]$, every report $p_i \in [0,1]$, every vector of reports of the other agents $\vp_{-i}$, and every vector of wagers $\vw$,
$\E_{r \sim \Bern(b_i)} \left[  \Gamma_i\left(\left(b_i, \vp_{-i}\right), \vw, r\right) \right] 
\geq \E_{r \sim \Bern(b_i)}\left[   \Gamma_i\left(\left(p_i, \vp_{-i}\right), \vw, r\right) \right]$.

One class of budget-balanced, incentive-compatible wagering mechanisms is the Weighted Score Wagering Mechanisms  ($\WSWM$s) of \citet{LLVCRSP08,LLVCRSP15}. Fixing any proper loss function $\ell$ bounded in $[0,1]$, agent $i$ receives
\[
\Gamma^{\WSWM}_i(\vp, \vw, r) =  w_i \! \left(\! 1 \! - \! \ell(p_i,r) \! + \! \sum_{j \in [K]}w_j \ell(p_j,r) \! \right).
\]
$\WSWM$s are incentive-compatible because the payment an agent receives is a linear function of his loss, measured by a proper loss function.
An agent makes a profit (i.e., receives payment greater than his wager), whenever his loss is smaller than the wager-weighted average agent loss, so accurate agents are more likely to increase their wealth.

\section{The Full Information Setting}\label{sec:full-info}

In this section, we present and analyze an online prediction algorithm, Weighted-Score Update ($\wsu$), for the full information setting.  We show that $\wsu$ is incentive-compatible and achieves regret $O(\sqrt{T \ln K})$. 

Our key observation is that we can define a black-box reduction that transforms any budget-balanced wagering mechanism $\Gamma$ to an online learning algorithm by setting $\bpi_{t+1} = \Gamma( \vp_t, \bpi_t, r_t)$. Here we can interpret an expert's weight according to distribution $\bpi_t$ as their currency. Each expert ``wagers'' $\bpi_t$ at time $t$ and receives a payoff $\bpi_{t+1}$, which depends on the reports of the experts $\vp$ and the realization $r_t$. It is easy to see that any online prediction algorithm that is derived from an incentive-compatible wagering mechanism will in turn be incentive-compatible, because any misreport that increases weight $\bpi_{t+1}$ would also be a successful misreport in the wagering setting.

One might hope that applying this reduction to the $\WSWM$ would directly yield a no-regret online learning algorithm. But this is not the case, due to the fact that an expert who makes an inaccurate prediction can lose too much of his wealth (probability) if all other experts have low loss, and it can take a long time to recover from this. To handle this, we allow experts to ``wager'' only an $\eta$ fraction of their current probability at each timestep for some $\eta \in (0, 0.5]$. This guarantees that no expert can obtain a probability $\pi_{i,t}$ close to zero without having made a long series of inaccurate predictions. Formally, the update rule of our algorithm, the Weighted-Score Update ($\wsu$), is defined by:
\begin{equation}
\label{eqn:wswm-update}
 \pi_{i,t+1} = \eta \Gamma^{\WSWM}_i (\vp_t,\bpi_t, r_t) + (1-\eta) \pi_{i,t} ,
\end{equation}
with weights $\pi_{i,1}$ initialized to $\pi_{i,1}$ = $1/K$ for all $i$.

We must show that $\bpi_{t}$ is a valid probability distribution over experts at each $t$. This follows from the $\WSWM$ being budget-balanced; the proof is in the appendix (Lemma~\ref{lem:prop-distr-wsu}).

By rewriting the $\wsu$ update rule in terms of relative loss $\tL_{i,t} = \ell_{i,t}- \sum_{j \in [K]} \pi_{j,t} \ell_{j,t}$, we can see that the form of the update is quite familiar. In particular, from Equation~\eqref{eqn:wswm-update},
\begin{align*}
\pi_{i,t+1}    & = \eta \pi_{i,t} \left(\! 1 \! - \! \ell_{i,t} \! + \! \sum_{j \in [K]} \pi_{j,t} \ell_{j,t} \! \right) + (1-\eta) \pi_{i,t}
               = \pi_{i,t} ( 1 - \eta \tL_{i,t} ) . \numberthis{\label{eq:upd}}
\end{align*}
This resembles the update rule for the (unnormalized) weights maintained by $\mwu$, but with the relative loss $\tL_{i,t}$ in place of $\ell_{i,t}$. The \texttt{D-Prod} algorithm of \citet{EKMW08} involves a similar update, but using loss relative to a single fixed distribution over experts instead of $\bpi_t$. 

We are now ready to prove our guarantees.
The proof of Theorem~\ref{thm:no-regr-myopic2} proceeds in a similar manner to the standard proof that $\mwu$ satisfies no regret. However, our proof is slightly simpler because we do not need to make a distinction between (unnormalized) weights and (normalized) probabilities. We can therefore avoid introducing the standard potential function used in proofs of no regret.

\begin{theorem}\label{thm:no-regr-myopic2}
$\wsu$ is incentive-compatible and for step size $\eta = \sqrt{\ln(K)/T}$ yields regret $R \leq 2\sqrt{T \ln K}$.
\end{theorem}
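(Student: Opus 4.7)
The proof naturally splits into two independent claims: incentive compatibility, and the $O(\sqrt{T \ln K})$ regret bound.

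For incentive compatibility, I would argue directly from the black-box reduction. The update rule $\pi_{i,t+1} = \eta\, \Gamma^{\WSWM}_i(\vp_t, \bpi_t, r_t) + (1-\eta)\pi_{i,t}$ splits into a report-dependent piece and a report-independent piece: the second term $(1-\eta)\pi_{i,t}$ depends only on expert $i$'s past probability, not on his current report $p_{i,t}$. Therefore, from expert $i$'s perspective at round $t$, maximizing $\E_{r_t \sim \Bern(b_{i,t})}[\pi_{i,t+1}]$ is equivalent to maximizing $\E_{r_t \sim \Bern(b_{i,t})}[\Gamma^{\WSWM}_i(\vp_t, \bpi_t, r_t)]$. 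Since $\WSWM$ is itself incentive-compatible (for any fixed wager vector, including $\bpi_t$), the maximizer is $p_{i,t} = b_{i,t}$, matching Definition~\ref{def:online-ic}.

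For the regret bound, I would exploit the simplified form \eqref{eq:upd}: $\pi_{i,t+1} = \pi_{i,t}(1 - \eta \tL_{i,t})$ with $\tL_{i,t} = \ell_{i,t} - \sum_j \pi_{j,t}\ell_{j,t}$. Telescoping and using $\pi_{i,1} = 1/K$, for any comparator expert $i^\star$,
\begin{equation*}
\pi_{i^\star,T+1} = \frac{1}{K}\prod_{t=1}^T (1 - \eta \tL_{i^\star,t}).
\end{equation*}
Since $\pi_{i^\star,T+1} \le 1$, taking logarithms gives $\sum_{t=1}^T \ln(1 - \eta \tL_{i^\star,t}) \le \ln K$. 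The loss values lie in $[0,1]$, so $\tL_{i^\star,t} \in [-1,1]$, and for $\eta \le 1/2$ we have $|\eta \tL_{i^\star,t}| \le 1/2$. The standard inequality $\ln(1-x) \ge -x - x^2$ for $|x| \le 1/2$ then yields
\begin{equation*}
-\eta \sum_{t=1}^T \tL_{i^\star,t} - \eta^2 \sum_{t=1}^T \tL_{i^\star,t}^2 \;\le\; \ln K,
\end{equation*}
and since $\tL_{i^\star,t}^2 \le 1$, this rearranges to $\sum_t \tL_{i^\star,t} \ge -(\ln K)/\eta - \eta T$.

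The final step is the connection back to regret. Under the sampling rule $\bp_t = p_{I_t,t}$ with $I_t \sim \bpi_t$, we have $\E[\ell_t \mid \bpi_t] = \sum_j \pi_{j,t}\ell_{j,t}$, so $\sum_t \tL_{i^\star,t} = \sum_t \ell_{i^\star,t} - \E[\sum_t \ell_t]$. Choosing $i^\star$ to be the best fixed expert in hindsight and rearranging gives $R \le (\ln K)/\eta + \eta T$; plugging in $\eta = \sqrt{\ln(K)/T}$ (which satisfies $\eta \le 1/2$ for $T \ge 4\ln K$) yields $R \le 2\sqrt{T\ln K}$.

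I do not expect a serious obstacle here: the IC claim is immediate from the reduction, and the regret analysis is essentially the textbook $\mwu$ telescoping argument, simplified by the fact that $\bpi_t$ is already normalized so no potential function is needed. The only subtlety worth verifying carefully is that $\tL_{i,t}$ is ``centered'' in exactly the right way to identify $\sum_t \tL_{i^\star,t}$ with negative expected regret; this is what makes $\wsu$'s normalized update behave like the unnormalized $\mwu$ update.
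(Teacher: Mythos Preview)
Your proposal is correct and follows essentially the same route as the paper's proof: the incentive-compatibility argument via the convex combination of a $\WSWM$ payment and the report-independent term $(1-\eta)\pi_{i,t}$, and the regret argument via telescoping $\pi_{i^\star,T+1} = \tfrac{1}{K}\prod_t(1-\eta\tL_{i^\star,t}) \le 1$, applying $\ln(1-x)\ge -x-x^2$, and identifying $-\sum_t \tL_{i^\star,t}$ with $R$. The only cosmetic differences are that the paper assumes $\eta\in(0,\tfrac12]$ at the definition of $\wsu$ rather than deriving it from $T\ge 4\ln K$, and the paper's technical lemma states the logarithm inequality for all $x\le \tfrac12$ rather than $|x|\le\tfrac12$.
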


\begin{proof}
For incentive compatibility, note that from Equation~\eqref{eqn:wswm-update}, $\pi_{i,t+1}$ is a convex combination of a $\WSWM$ payment and $\pi_{i,t}$, which cannot be influenced by $i$'s report at time $t$. Since truthful reporting (at least weakly) maximizes each of these components, it also maximizes the sum.

For the regret, denoting by $i^*$ the best expert in hindsight, 
\begin{align*}
1       &\geq \pi_{i^*,T+1} = \pi_{i^*,T}\left( 1 - \eta \tL_{i^*,T}\right)  
        = \pi_{i^*,1} \prod_{t \in [T]} \left(1 - \eta \tL_{i^*,t} \right) = \frac{1}{K} \prod_{t\in [T]} \left(1 - \eta \tL_{i^*,t} \right) .
\end{align*}
Taking the logarithm for both sides of this inequality, we get 
\begin{align*}
0 
&\geq - \ln K + \sum_{t\in [T]} \ln \left(1 - \eta \tL_{i^*,t} \right)  
\geq - \ln K + \sum_{t\in [T]}\left( -\eta \tL_{i^*,t} - \eta^2 \tL_{i^*,t}^2 \right),
\end{align*} 
where the last inequality comes from the fact that for $x \leq 1/2$, $\ln(1-x) \geq -x -x^2$ (see Lemma~\ref{lem:technical}). Rearranging and dividing both sides by $\eta$ yields 
\begin{equation*}%
-\sum_{t \in [T]} \tL_{i^*,t} \leq \frac{\ln K}{\eta} + \eta \sum_{t \in [T]} \tL_{i^*,t}^2 .
\end{equation*}
Since we have $\sum_{t \in [T]} \tL_{i^*,t} =\sum_{t \in [T]} \ell_{i^*,t} - \sum_{t \in [T]}\sum_{j \in [K]} \pi_{j,t} \ell_{j,t} = -R$, this becomes 
\begin{equation*}
R \leq \frac{\ln K}{\eta} + \eta \sum_{t \in [T]} \tL_{i^*,t}^2 \leq \frac{\ln K}{\eta} + \eta T .
\end{equation*}
Finally, tuning $\eta = \sqrt{\ln (K)/T}$ gives us the result.
\end{proof}

If $T$ is not known in advance, a standard doubling trick~\citep{ACBFS02} can be applied with only a constant factor increase in regret; see Appendix~\ref{app:anytime-wsu} for details.

The regret and incentive-compatibility guarantees of $\wsu$ presented in Theorem~\ref{thm:no-regr-myopic2} hold for all $[0,1]$-bounded proper loss functions $\ell$. If $\ell$ is additionally convex, then these guarantees carry over to a (possibly more practical) variant of $\wsu$, termed $\wsuagg$, that uses the same update rule but sets $\bp_t = \sum_{i \in [K]} \pi_{i,t}p_{i,t}$ rather than choosing a single expert.  Incentive compatibility is immediate.  The regret bound follows from the fact that, by Jensen's inequality,
\[
\sum_{t \in [T]} \ell\left(\sum_{i \in [K]} \pi_{i,t}p_{i,t}, r_t\right) \leq \sum_{t \in [T]} \sum_{i \in [K]} \pi_{i,t}\ell(p_{i,t},r_t).
\]

\section{The Partial Information Setting}\label{sec:bandit}

The encouraging results of the previous section apply only when the learner has access to the reports of all experts. But what if the learner has only partial information regarding these reports and still wants to incentivize all experts to report their predictions truthfully? In this section, we provide and analyze a novel algorithm, Weighted-Score Update with Uniform Exploration ($\wsux$),  that is simultaneously no-regret and incentive-compatible in the bandit setting in which the learner chooses a single expert $I_t$ at each round and observes only that expert's prediction. We show this algorithm has regret $O(T^{2/3} (K \ln K)^{1/3})$. This guarantee is weaker than that of \texttt{EXP3}, but we see in Section~\ref{sec:experiments} that $\wsux$ can perform similarly to $\texttt{EXP3}$ in practice with the additional advantage of incentive compatibility.

One might think that the standard trick of replacing the loss $\ell_{i,t}$ with an unbiased estimator $\hell_{i,t}$ in the $\wsu$ update rule would suffice in order to guarantee both incentive compatibility and a regret rate of $O(\sqrt{T \ln K})$. Specifically, following \citet{ACBFS02}, we might consider setting $\hell_{i, t} = 0$ for all experts $i \neq I_t$ whose predictions we do not observe, and $\hell_{I_t,t} = \ell_{I_t,t}/\pi_{{I_t},t}$ for the chosen expert. However, since these estimated losses are unbounded, this could lead to weights $\pi_{i,t}$ moving outside of $[0,1]$, and we would no longer have a valid algorithm. 

To solve this, we mix a distribution generated via $\wsu$-style updates with a small amount of the uniform distribution. This does not affect incentives, since the experts have no way of altering the uniform distribution, and has the convenient property that the estimated loss function is now bounded. By carefully tuning parameters, we are able to guarantee a valid probability distribution over experts.  The resulting updates are given in Algorithm~\ref{algo:bandit}.

\begin{algorithm}[h!]
\caption{$\wsux$ with parameters $\eta$ and $\gamma$ such that $0 < \eta, \gamma < 1/2$ and $\eta K/\gamma \leq 1/2$.}\label{algo:bandit}
\begin{algorithmic}[1]
\State \label{step:1}Set $\pi_{i,1} = \frac{1}{K}, \forall i \in [K]$
\For{$t \in [T]$}
\State \label{step:prob} Choose expert $I_t \sim \tpi_{i,t} = (1-\gamma) \pi_{i,t} + \frac{\gamma}{K}$
\State Compute: $\hell_{I_t,t} = \frac{\ell_{I_t,t}}{\tpi_{I_t,t}}$ and $\hell_{i,t} = 0, \forall i \neq I_t$
\State \label{step:wswm-mwu}Update $\pi_{i,t+1} \!=\! \pi_{i,t}\! \left(1 \!-\! \eta \left(\hell_{i,t}\! -\! \sum_{j \in [K]} \pi_{j,t} \hell_{j,t} \right) \right)$
\EndFor
\end{algorithmic}
\end{algorithm}

We first prove that this is a valid algorithm, that is, that the distributions $\tbpi_t$ from which an expert is selected are valid, under appropriate settings of $\eta$ and $\gamma$. 

\begin{lemma}\label{lem:prop-distr}
If  $\eta K/\gamma \leq 1/2$, the $\wsux$ weights $\bpi_t$ and $\tbpi_t$ are valid probability distributions for all $t \in [T+1]$.
\end{lemma}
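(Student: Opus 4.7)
The plan is to prove the lemma by induction on $t$, using the straightforward observation that $\tbpi_t$ inherits validity from $\bpi_t$ via the mixing step, so the real work is showing that the update in Step~\ref{step:wswm-mwu} preserves both nonnegativity and the sum-to-one property of $\bpi_t$.

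First I would handle the base case: $\pi_{i,1}=1/K$ for all $i$, so $\bpi_1$ is trivially a valid distribution, and $\tilde{\pi}_{i,1} = (1-\gamma)\pi_{i,1} + \gamma/K = 1/K$ is also valid. For the inductive step, assuming $\bpi_t$ is a valid distribution, I would first observe that $\tilde{\pi}_{i,t} = (1-\gamma)\pi_{i,t} + \gamma/K \geq 0$ for all $i$ and $\sum_i \tilde{\pi}_{i,t} = (1-\gamma) + \gamma = 1$, so $\tbpi_t$ is valid.

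Next I would verify that $\sum_i \pi_{i,t+1} = 1$. Summing the update rule and using $\sum_i \pi_{i,t} = 1$ from the inductive hypothesis, the cross term $\eta\bigl(\sum_i \pi_{i,t}\bigr)\bigl(\sum_j \pi_{j,t}\hell_{j,t}\bigr)$ exactly cancels $\eta\sum_i \pi_{i,t}\hell_{i,t}$, leaving $\sum_i \pi_{i,t+1} = \sum_i \pi_{i,t} = 1$. This step uses only budget-balancedness of the update and is routine.

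The main (though still easy) step is nonnegativity. For $i \neq I_t$, the estimator $\hell_{i,t}=0$, so $\pi_{i,t+1} = \pi_{i,t}\bigl(1 + \eta\sum_j \pi_{j,t}\hell_{j,t}\bigr) \geq 0$ since every $\hell_{j,t}\geq 0$. For $i = I_t$, I would use that $\tilde{\pi}_{I_t,t} \geq \gamma/K$ (because of the uniform exploration floor), so $\hell_{I_t,t} = \ell_{I_t,t}/\tilde{\pi}_{I_t,t} \leq K/\gamma$, giving $\eta \hell_{I_t,t} \leq \eta K/\gamma \leq 1/2$ by assumption. Hence $1 - \eta\bigl(\hell_{I_t,t} - \sum_j \pi_{j,t}\hell_{j,t}\bigr) \geq 1 - \eta \hell_{I_t,t} \geq 1/2 > 0$, and $\pi_{I_t,t+1} \geq 0$. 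Combining these closes the induction, and the validity of $\tbpi_{t+1}$ then follows by the same mixing argument. The only mildly delicate point is remembering to divide by $\tilde{\pi}_{I_t,t}$ rather than $\pi_{I_t,t}$ when bounding $\hell_{I_t,t}$, which is precisely why the floor $\gamma/K$ in Step~\ref{step:prob} is essential.
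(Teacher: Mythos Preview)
Your proposal is correct and follows essentially the same inductive argument as the paper: base case trivial, sum-to-one by cancellation, nonnegativity split into the cases $i\neq I_t$ (where $\hell_{i,t}=0$) and $i=I_t$ (where the floor $\tpi_{I_t,t}\ge\gamma/K$ combined with $\eta K/\gamma\le 1/2$ gives the bound), and $\tbpi_{t+1}$ inherited by mixing. The only cosmetic difference is that the paper writes out the factor $(1-\pi_{I_t,t})$ explicitly before bounding, whereas you directly drop the nonnegative sum $\sum_j\pi_{j,t}\hell_{j,t}$; both routes are equivalent.
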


\begin{proof}
We prove this inductively for $\bpi_t$ and $\tbpi_t$ simultaneously.  The base case is trivial since at time $t=1$, $\forall i \in [K], \pi_{i,1} =  \tpi_{i,1} = 1/K$. Now assume that for some $t$ both $\bpi_t$ and $\tbpi_t$ are valid probability distributions.
We distinguish two cases. First, suppose $i \neq I_t$. Then, since $\hell_{i,t} = 0$, the $\wsux$ update rule becomes
\begin{equation*}
\pi_{i,t+1} = \pi_{i,t} \left( 1 - \eta \left(0 - \pi_{I_t,t}\frac{\ell_{I_t,t}}{\tpi_{I_t,t}} \right)\right) \geq 0.
\end{equation*} 
Second, suppose $i = I_t$. Then
\begin{align*}
\pi_{i,t+1} &= \pi_{i,t} \left( 1 - \eta \left(\frac{\ell_{i,t}}{\tpi_{i,t}} - \pi_{i,t}\frac{\ell_{i,t}}{\tpi_{i,t}}\right) \right)  
= \pi_{i,t} \left( 1 - \eta \frac{\ell_{i,t}}{\tpi_{i,t}} \left(1 - \pi_{i,t}\right) \right) \\
&\geq \pi_{i,t} \left( 1 - \frac{\eta}{\tpi_{i,t}}\right) 
\geq \pi_{i,t} \left( 1 - \eta  \frac{K}{\gamma}\right) \geq 0, 
\end{align*}
where the penultimate inequality comes from the fact that $\tpi_{i,t} \geq \gamma/K$, since by the inductive assumption $\pi_{i,t} \geq 0$. The last follows from the assumption that $\eta K/\gamma \leq 1/2$.
Moreover, for the sum of probabilities we get: 
\begin{align*}
&\sum_{i \in [K]} \pi_{i,t+1} = \sum_{i \in [K]} \pi_{i,t} \left( 1 - \eta \left( \hell_{i,t} - \sum_{j \in [K]} \pi_{j,t} \hell_{j,t} \right)\right) \\
&= \sum_{i \in [K]} \pi_{i,t} - \eta \left( \sum_{i \it [K]} \pi_{i,t}\hell_{i,t} - \sum_{i \in [K]} \pi_{i,t}\sum_{j \in [K]} \pi_{j,t} \hell_{j,t} \right) \\
&= 1 - \eta \left( \sum_{i \in [K]} \pi_{i,t}\hell_{i,t} - \sum_{j \in [K]} \pi_{j,t} \hell_{j,t} \right) = 1 .
\end{align*}
Thus $\bpi_{t+1}$ is valid. Since $\tbpi_{t+1}$ is a convex combination of two probability distributions, it is also a probability distribution, completing the inductive argument.
\end{proof}

We are now ready to state the main theorem. The requirement that $T \ge K \ln{K}$ ensures that the precondition of Lemma~\ref{lem:prop-distr} is satisfied for the settings of $\eta$ and $\gamma$ used.

\begin{theorem}\label{thm:regr-bandit}
For $T \ge K \ln{K}$ and parameters $\eta = \left(\frac{\ln K}{4K^{1/2}T} \right)^{2/3}$ and $\gamma = \left( \frac{K \ln K}{4T} \right)^{1/3}$, $\wsux$ is incentive compatible and yields regret $R \leq 2(4T)^{2/3} (K \ln K)^{1/3}$.
\end{theorem}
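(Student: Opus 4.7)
The plan is to prove the two claims separately, mirroring the strategy used for $\wsu$ in Theorem~\ref{thm:no-regr-myopic2}, while carefully handling the extra complications introduced by unbiased estimation (bounded via the uniform exploration) and the gap between the update distribution $\bpi_t$ and the sampling distribution $\tbpi_t$.

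For incentive compatibility, I would condition on the identity of the chosen expert $I_t$. When $I_t \neq i$, the estimator $\hell_{i,t}=0$ and $\sum_j \pi_{j,t}\hell_{j,t} = \pi_{I_t,t}\hell_{I_t,t}$ depends only on $p_{I_t,t}$, so $\pi_{i,t+1}$ is independent of $p_{i,t}$. When $I_t = i$, the update collapses to
\[
\pi_{i,t+1}=\pi_{i,t}\!\left(1-\eta\,\frac{\ell(p_{i,t},r_t)}{\tpi_{i,t}}(1-\pi_{i,t})\right),
\]
which is an affine decreasing function of $\ell(p_{i,t},r_t)$ because $\pi_{i,t}\in[0,1]$ by Lemma~\ref{lem:prop-distr}. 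Taking $\E_{r_t\sim\Bern(b_{i,t})}$ and using properness of $\ell$ (Definition~\ref{def:proper-loss}) shows truthful reporting (weakly) maximizes the conditional expectation; averaging over $I_t\sim\tbpi_t$ preserves the inequality, and since $\tpi_{i,t+1}=(1-\gamma)\pi_{i,t+1}+\gamma/K$ is an increasing affine function of $\pi_{i,t+1}$, the same holds for $\tpi_{i,t+1}$.

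For the regret bound I follow the $\wsu$ template. Set $\tL_{i,t}=\hell_{i,t}-\sum_j \pi_{j,t}\hell_{j,t}$. The update rule yields $\pi_{i^*,T+1}=\frac{1}{K}\prod_t(1-\eta\tL_{i^*,t})\le 1$. Under the stated parameters one verifies $|\eta\tL_{i^*,t}|\le \eta(K/\gamma+O(1))\le 1/2$, so Lemma~\ref{lem:technical}'s bound $\ln(1-x)\ge -x-x^2$ applies. Taking logs and rearranging gives
\[
-\sum_{t\in[T]}\tL_{i^*,t}\le \frac{\ln K}{\eta}+\eta\sum_{t\in[T]}\tL_{i^*,t}^2.
\]
Now I take expectations. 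Since $\E[\hell_{i,t}\mid\text{history}]=\ell_{i,t}$, the left side equals $\sum_t\sum_j\pi_{j,t}\ell_{j,t}-\sum_t\ell_{i^*,t}$. To relate this to the algorithm's expected loss (which uses $\tbpi_t$), I use $\tpi_{j,t}=(1-\gamma)\pi_{j,t}+\gamma/K$ and $\ell_{j,t}\in[0,1]$ to obtain $\sum_j\pi_{j,t}\ell_{j,t}\ge \sum_j\tpi_{j,t}\ell_{j,t}-\gamma$, so $\E\bigl[-\sum_t\tL_{i^*,t}\bigr]\ge R-\gamma T$.

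For the second-moment term I expand $\tL_{i^*,t}^2$ into $\hell_{i^*,t}^2$, $\hell_{i^*,t}\cdot\sum_j\pi_{j,t}\hell_{j,t}$, and $\bigl(\sum_j\pi_{j,t}\hell_{j,t}\bigr)^2$. Each is non-zero only when $I_t=i^*$ (for the first two) or can be handled via $\sum_j \pi_{j,t}^2\ell_{j,t}^2/\tpi_{j,t}$: using $\tpi_{i,t}\ge\gamma/K$ and $\pi_{i,t}/\tpi_{i,t}\le 1/(1-\gamma)\le 2$, I get $\E[\hell_{i^*,t}^2]=\ell_{i^*,t}^2/\tpi_{i^*,t}\le K/\gamma$ and the remaining terms are $O(1)$, so $\E[\tL_{i^*,t}^2]\le K/\gamma+O(1)$. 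Combining everything gives
\[
R \le \frac{\ln K}{\eta}+\frac{\eta T K}{\gamma}+O(\eta T)+\gamma T,
\]
and substituting the stated $\eta$ and $\gamma$ balances the three dominant terms, each evaluating to $\tfrac{1}{3}\cdot 2(4T)^{2/3}(K\ln K)^{1/3}$ up to the prescribed constants, yielding the bound. The hypothesis $T\ge K\ln K$ is precisely what ensures $\eta K/\gamma\le 1/2$, activating Lemma~\ref{lem:prop-distr}.

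The main obstacle is the second-moment bound on $\tL_{i^*,t}^2$: the naive bound $|\tL_{i^*,t}|\le K/\gamma$ loses a factor of $K/\gamma$ and destroys the rate. The point is that $\hell_{i^*,t}$ has variance only $O(K/\gamma)$ (not squared) because the estimator is non-zero with probability only $\tpi_{i^*,t}$, and the cross and "bar" terms are actually $O(1)$ thanks to the bound $\pi_{i,t}/\tpi_{i,t}\le 2$. Getting these bounds right, together with the $\gamma$-gap between $\bpi_t$ and $\tbpi_t$ in the first-order analysis, is where the three-way balance in $\eta$ and $\gamma$ (and the resulting $T^{2/3}$ rate instead of $\sqrt{T}$) emerges.
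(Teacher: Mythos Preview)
Your proposal is correct and follows essentially the same route as the paper: a multiplicative-update potential argument on $\pi_{i^*,t}$, the technical inequality $\ln(1-x)\ge -x-x^2$, unbiased estimation (Lemma~\ref{lem:unbiased}), a second-moment bound, and then the $\gamma$-correction between $\bpi_t$ and $\tbpi_t$. The only notable difference is in how you control $\E\bigl[\tL_{i^*,t}^2\bigr]$: the paper first applies $(a-b)^2\le a^2+b^2$ and Jensen to obtain the bound $\frac{\ln K}{\eta}+\eta\sum_t\hell_{i^*,t}^2+\eta\sum_t\sum_i\pi_{i,t}\hell_{i,t}^2$ (Lemma~\ref{eq:sec-order}), which after expectation yields $\frac{\eta KT}{\gamma}+2\eta KT$, whereas you expand $\tL_{i^*,t}^2$ directly and use $\pi_{i,t}/\tpi_{i,t}\le 2$ to show the cross and ``bar'' terms contribute only $O(1)$ per round, giving $\frac{\eta KT}{\gamma}+O(\eta T)$; your lower-order term is actually tighter (no factor of $K$), but since both are dominated by $\eta KT/\gamma$, the final rate and constants are unaffected.
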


The proof of the theorem will follow from a series of claims and lemmas. We first examine the moments of $\hell_{i,t}$ and verify that it is an unbiased estimator of $\ell_{i,t}$; the proof is direct and in Appendix~\ref{app:bandit-known-T}.

\begin{lemma}[Moments]\label{lem:unbiased}
Taking expectation with respect to the choice of expert at round $t$ and keeping all else fixed, $\forall i \in [K], t \in [T], \E_{I_t \sim \tpi_{t}} \left[ \hell_{i,t}\right] = \ell_{i,t}$.
Furthermore, 
\begin{equation}\label{lem:sec-mom}
\E_{I_t \sim \tpi_t} \left[\hell_{i,t}^2 \right] = \frac{\ell_{i,t}^2}{\tpi_{i,t}} \leq \frac{1}{\tpi_{i,t}} .
\end{equation}
\end{lemma}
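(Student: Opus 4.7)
The plan is to prove both moment identities by direct computation, exploiting the fact that $\hell_{i,t}$ takes only two possible values depending on whether $I_t = i$ or not. Since, conditional on all quantities available at the start of round $t$, the only randomness in $\hell_{i,t}$ comes from the draw of $I_t \sim \tbpi_t$, this is essentially a one-line calculation for each part.

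First, I would unpack the definition of the estimator: $\hell_{i,t} = \ell_{i,t}/\tpi_{i,t}$ on the event $\{I_t = i\}$ (which has probability $\tpi_{i,t}$ under $\tbpi_t$) and $\hell_{i,t} = 0$ otherwise. For the first moment, I then write
\begin{equation*}
\E_{I_t \sim \tbpi_t}[\hell_{i,t}] = \tpi_{i,t}\cdot\frac{\ell_{i,t}}{\tpi_{i,t}} + (1-\tpi_{i,t})\cdot 0 = \ell_{i,t},
\end{equation*}
which gives the unbiasedness claim. This step requires that $\tpi_{i,t} > 0$, which is guaranteed by Lemma~\ref{lem:prop-distr} together with the lower bound $\tpi_{i,t} \geq \gamma/K$ built into Step~\ref{step:prob} of Algorithm~\ref{algo:bandit}, so the division is always well defined.

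For the second moment, I would perform the analogous computation on $\hell_{i,t}^2$,
\begin{equation*}
\E_{I_t \sim \tbpi_t}[\hell_{i,t}^2] = \tpi_{i,t}\cdot\Bigl(\frac{\ell_{i,t}}{\tpi_{i,t}}\Bigr)^{\!2} + (1-\tpi_{i,t})\cdot 0 = \frac{\ell_{i,t}^2}{\tpi_{i,t}},
\end{equation*}
and then use the assumption that the loss function takes values in $[0,1]$, so that $\ell_{i,t}^2 \le 1$, to conclude $\E[\hell_{i,t}^2] \le 1/\tpi_{i,t}$.

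Honestly, there is no substantive obstacle here; both parts follow from expanding the discrete expectation with respect to the two-point distribution of $\hell_{i,t}$. The only thing worth flagging explicitly in the writeup is that the mixing with the uniform distribution in Algorithm~\ref{algo:bandit} ensures $\tpi_{i,t} \geq \gamma/K > 0$, so the estimator is well-defined and the identities hold deterministically given the history up to time $t$.
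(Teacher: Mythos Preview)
Your proof is correct and essentially identical to the paper's: both compute the expectation by splitting on the event $\{I_t=i\}$, the paper writing this via an indicator sum $\sum_{j\in[K]}\tpi_{j,t}\,\ell_{i,t}\1\{j=i\}/\tpi_{i,t}$ and you writing it as a two-point expectation, with the final inequality from $\ell_{i,t}\in[0,1]$ in both cases. Your explicit remark that $\tpi_{i,t}\ge\gamma/K>0$ ensures the estimator is well defined is a small addition the paper leaves implicit.
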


We next provide a second-order regret bound. It differs from the standard second-order regret bounds presented for bandit algorithms (see e.g., \citet[Chapter~3]{BCB12}) because it relates the ``estimated regret'' of the learner to the second moment of the estimated loss of the best-fixed expert in hindsight. The proof can be found in Appendix~\ref{app:bandit-known-T}. 

\begin{lemma}[Second-Order Bound]\label{eq:sec-order}
For $\wsux$, the probability vectors $\bpi_1, \dots, \bpi_T$ and the estimated losses $\hell_{i,t}$ for $i \in [K], t \in [T]$ induce the following second-order bound:
\begin{align*}
\sum_{t \in [T]} \sum_{i \in [K]} \pi_{i,t} \hell_{i,t} &- \sum_{t \in [T]}\hell_{i^*,t} \leq 
\frac{\ln K}{\eta} + \eta \sum_{t \in [T]} \hell_{i^*,t}^2 + \eta \sum_{t \in [T]} \sum_{i \in [K]} \pi_{i,t} \hell_{i,t}^2
\end{align*}
where $i^* = \arg \min_{i \in [K]} \sum_{t \in [T]} \ell_{i,t}$.
\end{lemma}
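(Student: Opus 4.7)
The proof follows the same template as Theorem~\ref{thm:no-regr-myopic2}, with $\hell_{i,t}$ in place of $\ell_{i,t}$ and with an extra step to handle the quadratic term that arises from expanding the relative estimated loss.

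First, I would rewrite the $\wsux$ update rule in terms of the \emph{estimated relative loss} $\hat{L}_{i,t} \eqdef \hell_{i,t} - \sum_{j \in [K]} \pi_{j,t} \hell_{j,t}$, giving $\pi_{i,t+1} = \pi_{i,t}(1 - \eta \hat{L}_{i,t})$. Iterating over $t$ and using $\pi_{i^*,1} = 1/K$,
\[
 \pi_{i^*,T+1} = \tfrac{1}{K} \prod_{t \in [T]} \bigl(1 - \eta \hat{L}_{i^*,t}\bigr) .
\]
By Lemma~\ref{lem:prop-distr}, the LHS is at most $1$; taking logarithms then gives $\sum_{t} \ln(1 - \eta \hat{L}_{i^*,t}) \le \ln K$.

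Next I would apply the inequality $\ln(1-x) \ge -x - x^2$ (Lemma~\ref{lem:technical}) to each term. This requires checking $|\eta \hat{L}_{i^*,t}| \le 1/2$; the key observation is that the inductive argument in Lemma~\ref{lem:prop-distr} shows $\tpi_{i,t} \ge \gamma/K$, so $|\hell_{i,t}| \le K/\gamma$, and by the triangle inequality $|\eta \hat{L}_{i^*,t}| \le 2\eta K/\gamma \le 1$. (A slightly sharper case analysis---noting that $\sum_j \pi_{j,t}\hell_{j,t} = \pi_{I_t,t}\hell_{I_t,t} \le \pi_{I_t,t}/\tpi_{I_t,t} \le 1/(1-\gamma)$---gives the cleaner bound $|\eta \hat{L}_{i^*,t}| \le 1/2$ under the stated precondition.) Substituting and rearranging yields
\[
 -\sum_{t \in [T]} \hat{L}_{i^*,t} \le \frac{\ln K}{\eta} + \eta \sum_{t \in [T]} \hat{L}_{i^*,t}^2 .
\]
The LHS is exactly $\sum_t \sum_j \pi_{j,t}\hell_{j,t} - \sum_t \hell_{i^*,t}$, matching the left side of the lemma.

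Finally, I would bound the quadratic term. Since $\hell_{i^*,t} \ge 0$ and $\sum_j \pi_{j,t}\hell_{j,t} \ge 0$, the cross term in $(a-b)^2 = a^2 - 2ab + b^2$ is non-positive, so
\[
 \hat{L}_{i^*,t}^2 \le \hell_{i^*,t}^2 + \Bigl(\sum_{j \in [K]} \pi_{j,t} \hell_{j,t}\Bigr)^{\!2} \le \hell_{i^*,t}^2 + \sum_{j \in [K]} \pi_{j,t} \hell_{j,t}^2 ,
\]
where the second step is Jensen's inequality applied to $x \mapsto x^2$ with weights $\pi_{j,t}$. Summing over $t$ and plugging into the previous display produces exactly the lemma statement.

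The main obstacle I anticipate is the range check that justifies applying $\ln(1-x) \ge -x - x^2$: unlike in the full-information proof of Theorem~\ref{thm:no-regr-myopic2} where $|\tL_{i^*,t}| \le 1$ is immediate, here $\hell_{i^*,t}$ can be as large as $K/\gamma$, and it is only the joint condition $\eta K/\gamma \le 1/2$ together with non-negativity of $\hell_{j,t}$ that keeps $\eta \hat{L}_{i^*,t}$ inside the valid range of the Taylor bound. Everything else---telescoping, taking logs, and the $(a-b)^2 \le a^2 + b^2$ trick for non-negative $a,b$---is routine.
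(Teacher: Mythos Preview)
Your proposal is correct and follows essentially the same route as the paper: telescope the multiplicative update for $\pi_{i^*,t}$, take logarithms, apply $\ln(1-x)\ge -x-x^2$ (Lemma~\ref{lem:technical}), then split the quadratic term via $(a-b)^2\le a^2+b^2$ for non-negative $a,b$ and Jensen. The one place where the paper is cleaner is the range check: Lemma~\ref{lem:technical} is one-sided (it only needs $x\le 1/2$), so you do not need $|\eta\hat L_{i^*,t}|\le 1/2$; the paper simply verifies $\eta\hat L_{i^*,t}\le 1/2$ by the two cases $i^*\neq I_t$ (the expression is $\le 0$) and $i^*=I_t$ (the expression is $\le \eta K/\gamma\le 1/2$), avoiding the detour through the triangle inequality and the $1/(1-\gamma)$ bound.
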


\begin{proof}
Since $\bpi_{T+1}$ is a valid probability distribution (Lemma~\ref{lem:prop-distr}), we have
\begin{align*}
1 &\geq\pi_{i^*,T+1} 
        = \pi_{i^*,T} \left(1 \!- \!\eta \left(\hell_{i^*,{T}} \!-\! \sum_{j \in [K]}\pi_{j,{T}} \hell_{j,T} \right) \right) \\
&= \pi_{i^*, 1} \prod_{t \in [T]} \left( 1 - \eta \left(\hell_{i^*,{t}} - \sum_{j \in [K]}\pi_{j,{t}} \hell_{j,t} \right)\right)
\end{align*} 
Taking the logarithm for both sides, and using the fact that $\pi_{i,1} = 1/K, \forall i \in [K]$, we get
\begin{equation}\label{eq:pot-lb}
0 \geq - \ln K + \sum_{t \in [T]} \ln \left( 1 - \eta \left(\hell_{i^*,{t}} - \sum_{j \in [K]}\pi_{j,{t}} \hell_{j,t} \right)\right) .
\end{equation}%
We next show that for all $t \in [T]$ and any $i \in [K]$
$\eta \left(\hell_{i,{t}} - \sum_{j \in [K]}\pi_{j,{t}} \hell_{j,t} \right) \leq 1/2$.
We distinguish two cases. First, if $i \neq I_t$, then the inequality holds since $\hell_{i,t} = 0$ and as a result the expression becomes $- \eta \cdot \pi_{I_t,t}\hell_{I_t,t} \leq 0$.
Second, if $i = I_t$, then the expression becomes
\begin{align*}
&\eta \frac{\ell_{I_t,t}}{\tpi_{I_t,t}} - \eta \pi_{I_t,t}\frac{\ell_{I_t,t}}{\tpi_{I_t,t}} = \eta \frac{\ell_{I_t,t}}{\tpi_{I_t,t}} \left( 1 - \pi_{I_t,t} \right) \\ 
&\leq \eta \frac{1}{\tpi_{I_t,t}}  &\tag{$\pi_{i,t} \geq 0, \ell_{i,t} \leq 1$} \\
&\leq \eta \frac{K}{\gamma} &\tag{$\tpi_{i,t} \geq \gamma/K$, since $\pi_{i,t} \geq 0$} \\
&\leq \frac{1}{2} &\tag{by definition}
\end{align*}
We can now lower bound Equation~\eqref{eq:pot-lb} using the fact that for $z \leq 1/2$ it holds that: $\ln ( 1 - z) \geq -z -z^2$ (Lemma~\ref{lem:technical}).
\begin{align*}
0 &\geq - \ln K + \sum_{t \in [T]} \left[- \eta \left(\hell_{i^*,{t}} - \sum_{j \in [K]}\pi_{j,{t}} \hell_{j,t} \right) \right] - \sum_{t \in [T]} \left[\eta^2 \left(\hell_{i^*,{t}} - \sum_{j \in [K]}\pi_{j,{t}} \hell_{j,t} \right)^2\right]\\
&\geq - \ln K - \eta \left[\sum_{t \in [T]} \hell_{i^*,t} - \sum_{t \in [T]} \sum_{j \in [K]}\pi_{j,t} \hell_{j,t} \right] - \eta^2 \left[\sum_{t \in [T]} \left( \hell_{i^*,t} - \sum_{j \in [K]} \pi_{j,t}\hell_{j,t}\right)^2 \right] \\
&\geq - \ln K - \eta \left[\sum_{t \in [T]} \hell_{i^*,t} - \sum_{t \in [T]} \sum_{j \in [K]}\pi_{j,t} \hell_{j,t} \right] - \eta^2 \sum_{t \in [T]} \hell_{i^*,t}^2 - \eta ^2 \sum_{t \in [T]} \left(\sum_{j \in [K]} \pi_{j,t}\hell_{j,t}\right)^2 \\
&\geq - \ln K - \eta \left[\sum_{t \in [T]} \hell_{i^*,t} - \sum_{t \in [T]} \sum_{j \in [K]}\pi_{j,t} \hell_{j,t} \right] - \eta^2 \sum_{t \in [T]} \hell_{i^*,t}^2 - \eta ^2 \sum_{t \in [T]} \sum_{j \in [K]} \pi_{j,t}\hell_{j,t}^2 
\end{align*}
where the second inequality uses the fact that for $a,b$ non-negative, $(a - b)^2 \leq a^2 + b^2$ and the last inequality uses Jensen's inequality for function $f(x) = x^2$. Rearranging the latter and dividing both sides by $\eta$ gives the result. 
\end{proof}

With that we can complete the proof of Theorem~\ref{thm:regr-bandit}.

\begin{proof}[Proof of Theorem~\ref{thm:regr-bandit}]
It follows from incentive compatibility of $\wsu$ that an expert maximizes the expected value of $\pi_{i,t+1}$ by minimizing the expected value of $\hell_{i,t}$. From the definition of $\hell_{i,t}$, it is easy to see that minimizing the expected value of $\hell_{i,t}$ is equivalent to minimizing the expected value of $\ell_{i,t}$. By properness of $\ell$, this is achieved by truthfully reporting $p_{i,t}=b_{i,t}$. 

We now show the regret bound. Taking expectations with respect to the choice of expert at round $t$ for both sides of the equation in Lemma~\ref{eq:sec-order}, we get 
\begin{align*}
\sum_{t \in [T]} \sum_{i \in [K]} \pi_{i,t} \E_{I_t \sim \tpi_t} \left[\hell_{i,t}\right] - \sum_{t \in [T]} \E_{I_t \sim \tpi_t} \left[\hell_{i^*,t}\right] \leq \eta \sum_{t \in [T]} \E_{I_t \sim \tpi_t} \left[ \hell_{i^*,t}^2 \right]\!+\!\frac{\ln K}{\eta}\!+\!\eta \sum_{t \in [T]} \sum_{i \in [K]} \pi_{i,t} \E_{I_t \sim \tpi_t} \left[ \hell_{i,t}^2 \right]\!.
\end{align*}
Using Lemma~\ref{lem:unbiased}, this gives us
\begin{align*}
\sum_{t \in [T]} &\sum_{i \in [K]} \pi_{i,t} \ell_{i,t} - \sum_{t \in [T]} \ell_{i^*,t} \leq \eta \sum_{t \in [T]} \frac{1}{\tpi_{i^*,t}} + \frac{\ln K}{\eta} + \eta \sum_{t \in [T]} \sum_{i \in [K]} \pi_{i,t} \frac{1}{\tpi_{i,t}}\\
&\leq \eta \sum_{t \in [T]} \frac{K}{\gamma} + \frac{\ln K}{\eta} + 2 \eta K T \leq \frac{\eta K T}{\gamma}  + \frac{\ln K}{\eta} + 2 \eta K T,
\end{align*}
where the second inequality uses the fact that $\pi_{i,t} \leq 2 \tpi_{i,t}, \forall i \in [K], t \in [T]$ since $\gamma/K \geq 0$ and $\gamma \leq 1/2$. Next, we re-write $\pi_{i,t} = \frac{\tpi_{i,t} - \gamma/K}{1 - \gamma}$, yielding
\begin{align*}
\sum_{t \in [T]} \sum_{i \in [K]} &\frac{\tpi_{i,t} - \frac{\gamma}{K}}{1 - \gamma} \ell_{i,t} - \sum_{t \in [T]} \ell_{i^*,t} \leq \leq \frac{\eta K T}{\gamma}  + \frac{\ln K}{\eta} + 2\eta K T.
\end{align*} 
Since $1 - \gamma < 1$ and $\ell(p_{i,t},r_t)\leq 1$, this can be relaxed to:
\begin{align*}
\sum_{t \in [T]} \sum_{i \in [K]} &\tpi_{i,t}\ell\left(p_{i,t},r_t\right) - \sum_{t \in [T]} \ell\left(p_{i^*,t},r_t\right) \leq \\ &\leq \gamma T + \frac{\eta K T}{\gamma} + \frac{\ln K}{\eta} + 2\eta K T.
\end{align*} 

Making $\gamma T = \eta K T/\gamma$ by setting $\gamma = \sqrt{\eta K}$, and $\eta = \left(\frac{\ln K}{4 K^{1/2} T} \right)^{2/3}$ we get the regret result.\footnote{The last derivation requires that $\eta \leq 1/K$, which is true for large enough horizons $T \geq K \ln K$.}
\end{proof}

As in the full information setting, a doubling trick can be applied if $T$ is unknown (Appendix~\ref{app:anytime-wsux}).

We note that, unlike the full information setting in which $\wsu$ achieves the optimal regret bound for general loss functions, our regret bound in Theorem~\ref{thm:regr-bandit} is not as good as what can be achieved without incentive compatibility. 
Examining our analysis, one can see that if the loss of the best-fixed expert in hindsight is zero at each round, then the regret guarantee achieved by $\wsux$ would be the same as \texttt{EXP3}, i.e., $O(\sqrt{T \ln K})$. Closing this gap via a tighter analysis of $\wsux$ or via a new incentive-compatible algorithm is a compelling question for future work.

\section{Forward-Looking Experts}
\label{sec:elf}
   
So far we have assumed that the experts are myopic, aiming at time $t$ to optimize their influence on the algorithm only at time $t+1$ with no regard for future rounds. It is natural to ask whether it is possible to design learning algorithms that satisfy no regret while incentivizing truthful reports from forward-looking experts who care about their influence $\pi_{i,t'}$ at all $t' > t$. Neither $\wsu$ nor $\wsux$ achieve this goal; see the appendix for examples that illustrate why.\footnote{It is worth noting that in these examples, an expert can gain only a negligible amount from misreporting; it is an open question whether $\wsu$ satisfies some notion of $\epsilon$-incentive compatibility.}

In order to derive an online learning algorithm that is incentive-compatible for forward-looking experts, we build on work by \citet{WFVPK18}, who studied a forecasting competition setting in which agents make predictions about a series of independent events, competing for a single prize. Unlike in our setting, their goal was to derive an incentive-compatible mechanism for choosing the winning agent; they are agnostic to how the elicited forecasts are aggregated. They defined a mechanism, Event-Lotteries Forecaster Selection Mechanism ($\elf$), in which, for every predicted event $\tau$, every agent $i$ is assigned a probability of being the event winner based on the quality of their prediction.  The winner of the competition is the agent who wins the most events.

We build on this idea to define an online learning algorithm, $\elfx$, for the full information setting.  Like $\wsu$, $\elfx$ incorporates $\WSWM$ payments, but in a different way. The distribution $\bpi_t$ at time $t$ is defined as the distribution over experts output by the following randomized process: 
\begin{enumerate*} 
\item At each round $\tau \in [t]$, pick agent $i$ as the ``winner'' $x_\tau$ with probability
\[
\frac{1}{K}\left( 1 - \ell_{i,\tau} + \frac{1}{K}\sum_{j \in [K]} \ell_{j,\tau} \right) .
\]
\item Select $\arg\max_{i \in [K]} \sum_{\tau \in [t]}\1(x_\tau \! =\! i)$, the expert who won the most events, breaking ties uniformly. 
\end{enumerate*}

It can be shown by a similar argument to that of~\citet{WFVPK18} that $\elfx$ is incentive-compatible. The proof, along with a formal definition of incentive compatibility for forward-looking experts, is in the appendix.

\begin{theorem}
	\label{thm:elfx-ic}
	$\elfx$ is incentive-compatible for forward-looking experts.
\end{theorem}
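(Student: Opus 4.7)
The plan is to show that at any round $t$, reporting $p_{i,t}=b_{i,t}$ weakly maximizes $\E[\pi_{i,t'}]$ simultaneously for every horizon $t'>t$, with the expectation taken over $r_t$ and all subsequent randomness in $\elfx$. Because this pointwise-in-$t'$ dominance is preserved by any aggregate of $(\pi_{i,t'})_{t'>t}$ that is non-decreasing in each coordinate, it suffices for incentive compatibility under any natural forward-looking utility (the formal definition of which is left to the appendix).

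The key structural fact I would exploit is that under $\elfx$ the internal round-winners $x_1,\dots,x_{t'}$ are drawn \emph{independently} across rounds, with $\Pr(x_\tau=j)=q_{j,\tau}\eqdef\tfrac{1}{K}\bigl(1-\ell_{j,\tau}+\tfrac{1}{K}\sum_k \ell_{k,\tau}\bigr)$. Hence expert $i$'s report at time $t$ touches only $(q_{j,t})_{j\in[K]}$ in the whole joint distribution $(x_1,\dots,x_{t'})$. A direct calculation from the formula for $q_{j,t}$ shows that replacing $p_{i,t}$ by $b_{i,t}$ shifts $q_{i,t}$ by $\tfrac{K-1}{K^2}\bigl(\ell(p_{i,t},r_t)-\ell(b_{i,t},r_t)\bigr)$ and each $q_{j,t}$ with $j\neq i$ by $-\tfrac{1}{K^2}$ times the same quantity; taking expectation over $r_t\sim\Bern(b_{i,t})$ and invoking properness of $\ell$ makes the common scalar nonnegative.

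To convert a round-$t$ shift in the $q$'s into a shift in $\E[\pi_{i,t'}]$, I would condition on the tally $W_j\eqdef\sum_{\tau\in[t']\setminus\{t\}}\1(x_\tau=j)$, which by independence of rounds is independent of both $r_t$ and $p_{i,t}$. Defining $f_j(W)\eqdef\Pr\bigl(i \text{ is selected at time } t'\mid W,\,x_t=j\bigr)$, a deterministic function of $W$ under uniform tiebreaking, we obtain $\E[\pi_{i,t'}]=\E_{r_t,W}\!\bigl[\sum_j q_{j,t}f_j(W)\bigr]$, and subtracting the two strategies yields
\[
\E[\pi_{i,t'}^{\mathrm{truth}}]-\E[\pi_{i,t'}^{p_{i,t}}]=\tfrac{1}{K^2}\,\E_{r_t}\!\bigl[\ell(p_{i,t},r_t)-\ell(b_{i,t},r_t)\bigr]\sum_{j\neq i}\bigl(\E_W[f_i(W)]-\E_W[f_j(W)]\bigr).
\]

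The main obstacle is the accompanying monotonicity claim $\E_W[f_i(W)]\ge\E_W[f_j(W)]$ for every $j\neq i$. I would establish this pointwise in $W$: moving from $x_t=j$ to $x_t=i$ transfers one win from $j$ to $i$ while leaving all other counts fixed, and a short case analysis on whether this swap (i) leaves $i$ still below some competitor, (ii) tips a tie in $i$'s favor, or (iii) moves $i$ strictly ahead of one, shows that $i$'s uniformly-tiebroken plurality probability weakly increases. Combining this monotonicity with properness makes both factors in the display nonnegative, giving $\E[\pi_{i,t'}^{\mathrm{truth}}]\ge\E[\pi_{i,t'}^{p_{i,t}}]$ for every $t'>t$, closely paralleling the incentive-compatibility argument for $\elf$ in~\citet{WFVPK18}.
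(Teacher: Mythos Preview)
Your proposal is correct and follows essentially the same approach as the paper's proof sketch, which also hinges on (i) independence of the per-round winner draws, (ii) truthful reporting simultaneously maximizing $q_{i,t}$ and minimizing each $q_{j,t}$ (because both reduce to minimizing $\E[\ell_{i,t}]$), and (iii) conditioning on the winners at all other rounds. Your version is in fact more explicit than the paper's sketch: you compute the exact shift in the $q_{j,t}$'s, factor the difference in $\E[\pi_{i,t'}]$ into a properness term and a monotonicity term, and give a case analysis for the plurality-with-uniform-tiebreak monotonicity $f_i(W)\ge f_j(W)$, whereas the paper leaves this implicit and defers details to \citet{WFVPK18}. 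One small point worth making explicit: since Definition~\ref{def:strong-ic} compares truthful reporting on all rounds $t\le t'<t^f$ against an arbitrary profile of reports on those rounds, you should note (as the paper does with its final sentence) that your single-round dominance argument can be iterated across rounds, which is valid precisely because your argument holds for arbitrary reports of $i$ at rounds other than $t$.
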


While proving that $\elfx$ is no-regret remains an open problem, in the following section, we present experimental results suggesting that its regret is sublinear in $T$ in practice.

\section{Experiments}\label{sec:experiments}

\begin{figure*}[t!]
\centering
\subfigure{\includegraphics[width=0.3\textwidth]{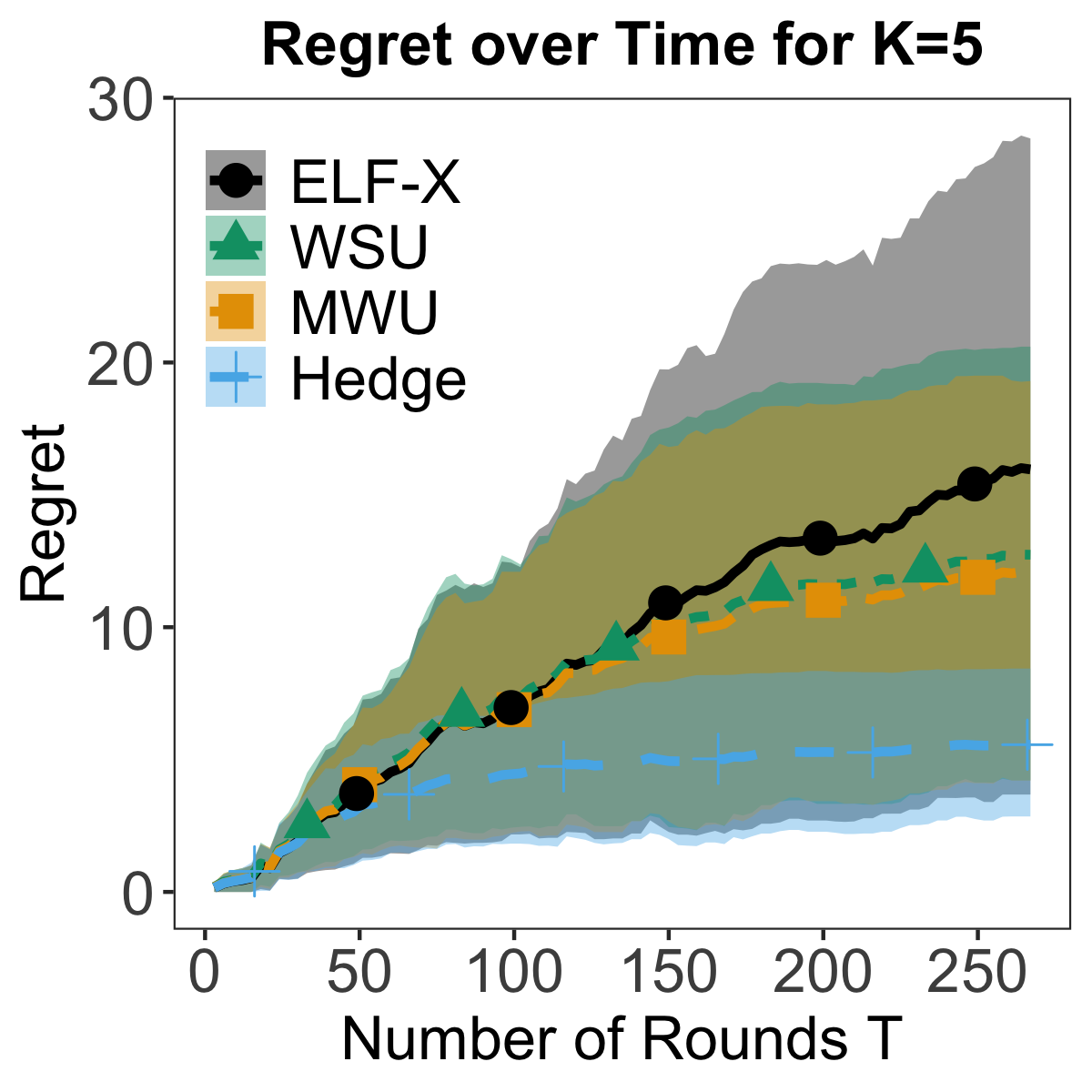}} \hfill
\subfigure{\includegraphics[width=0.3\textwidth]{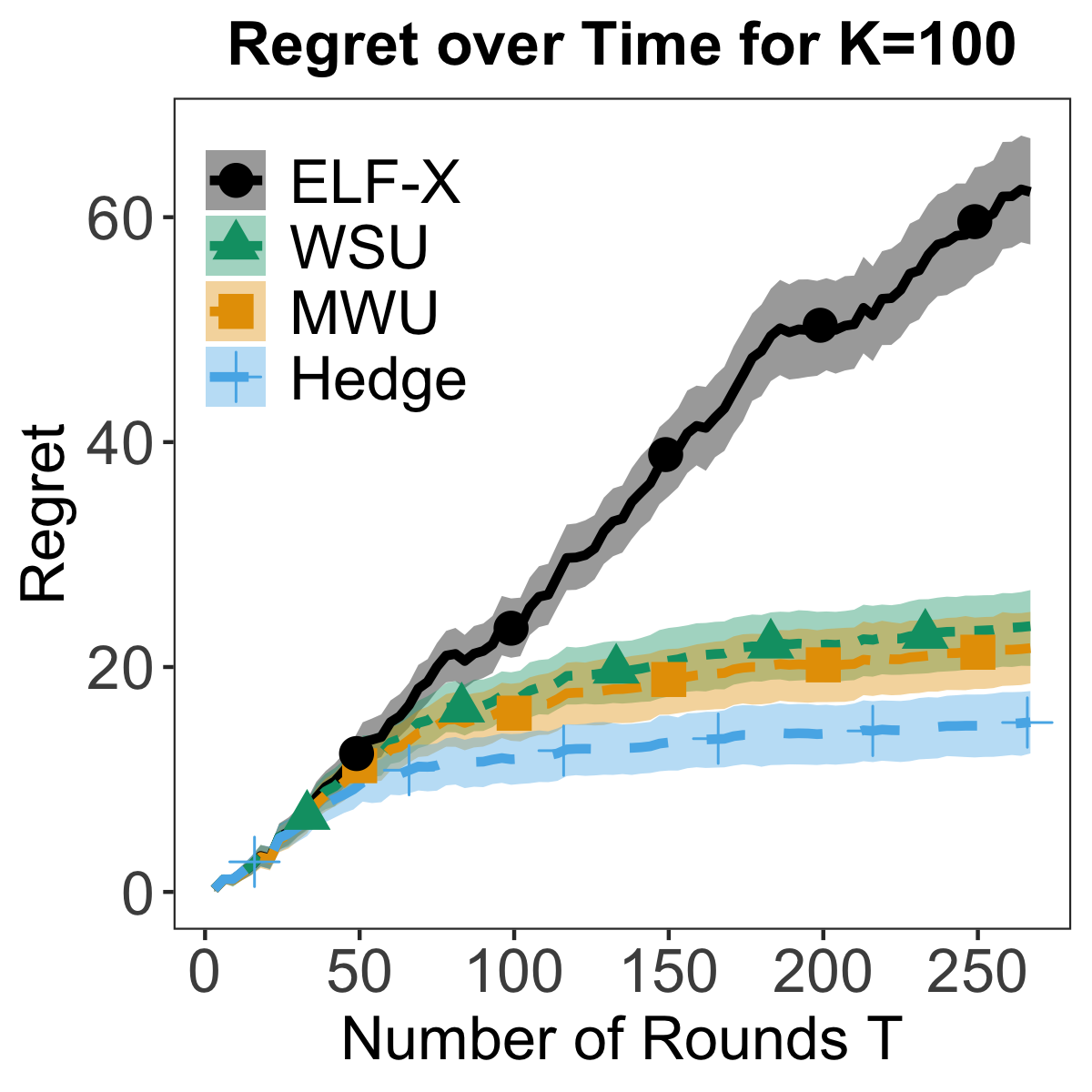}} \hfill
\subfigure{\includegraphics[width=0.3\textwidth]{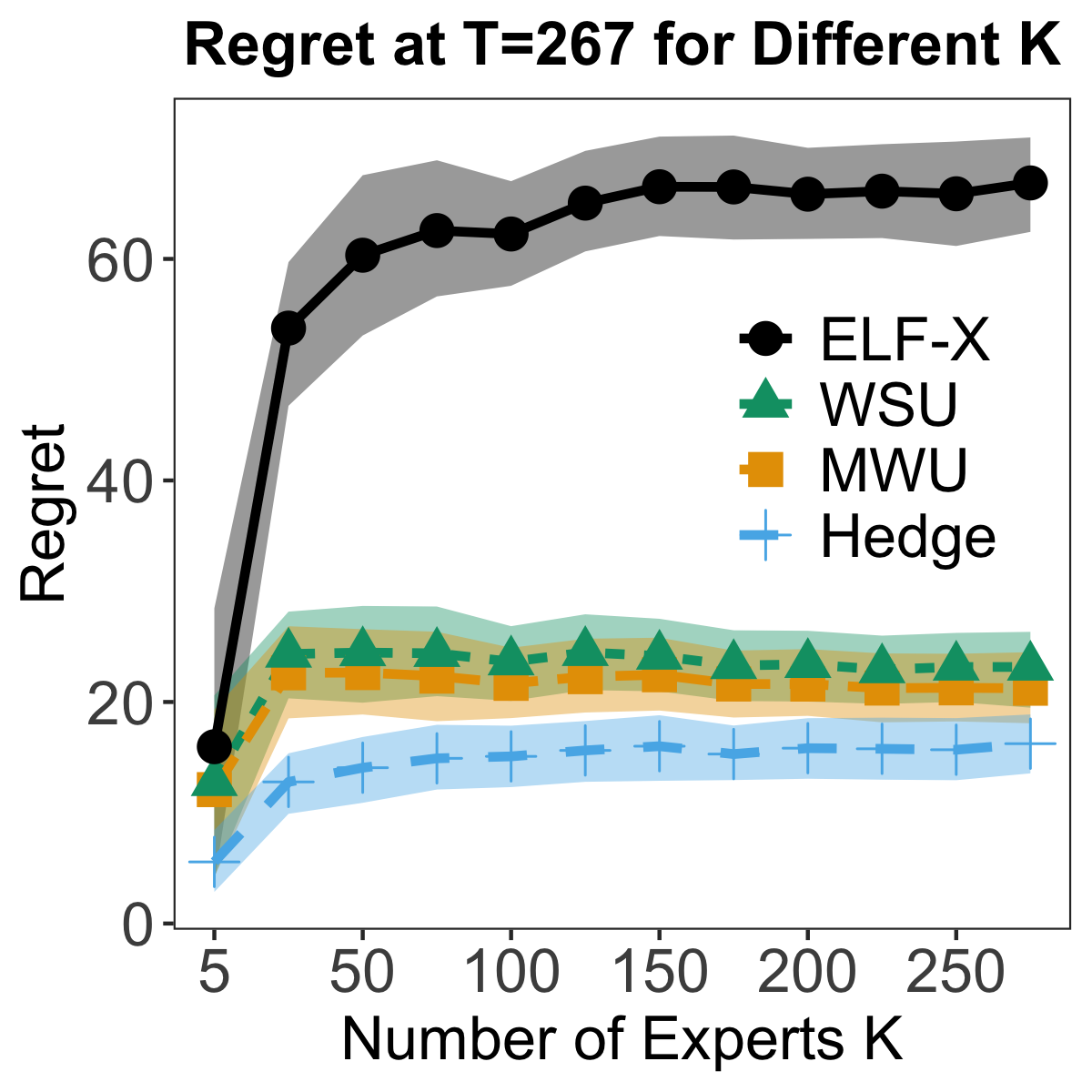}} \\  
\subfigure{\includegraphics[width=0.3\textwidth]{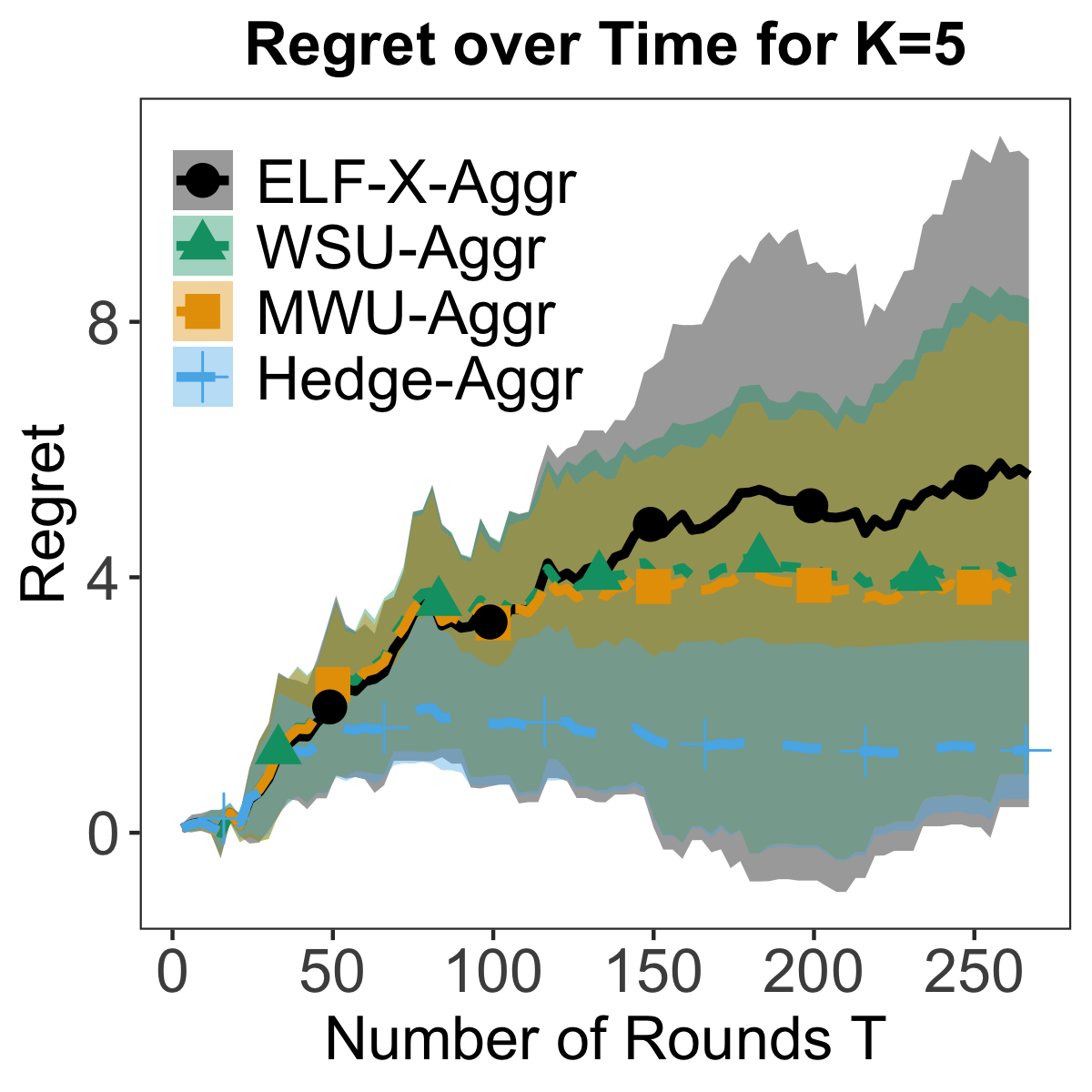}} \hfill
\subfigure{\includegraphics[width=0.3\textwidth]{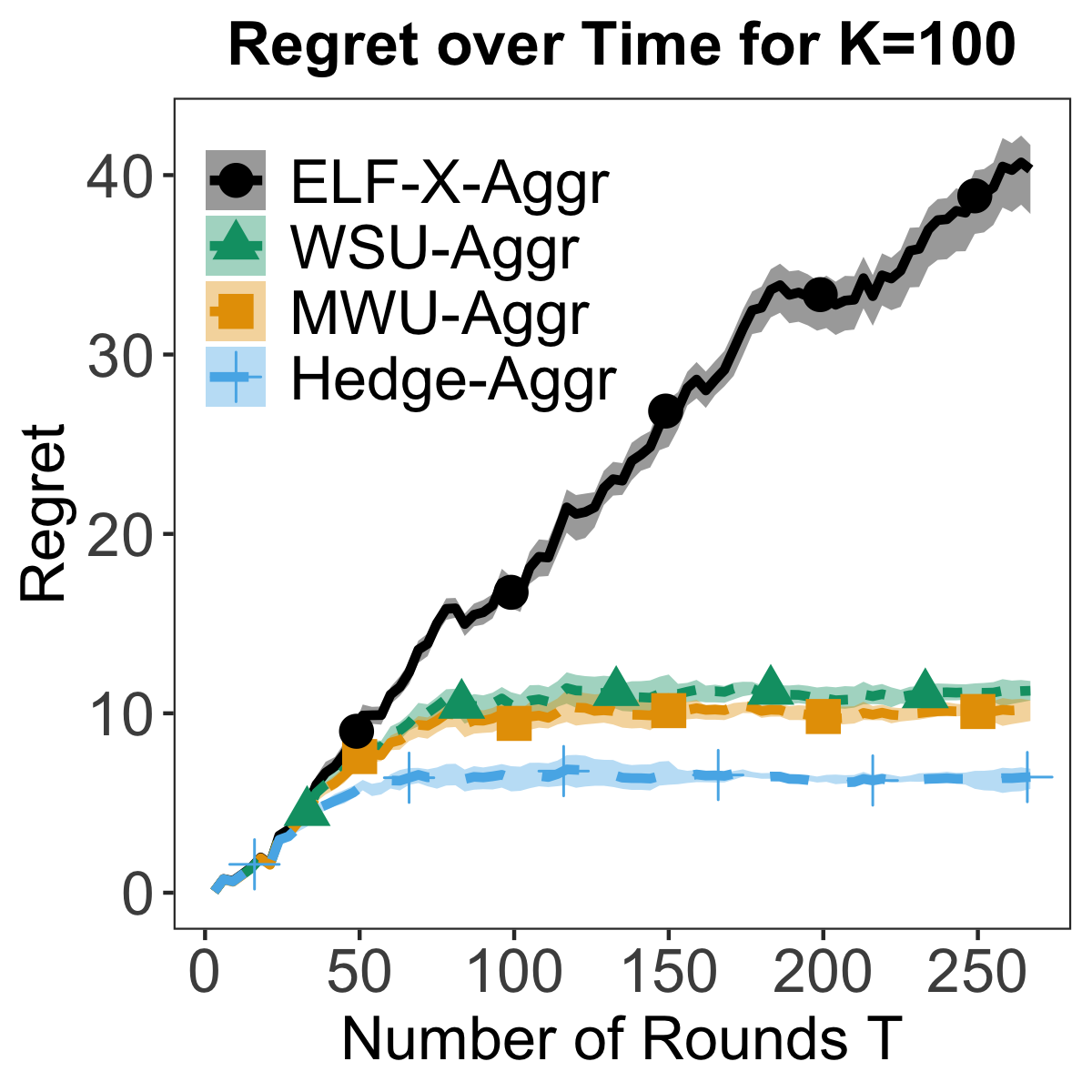}} \hfill
\subfigure{\includegraphics[width=0.3\textwidth]{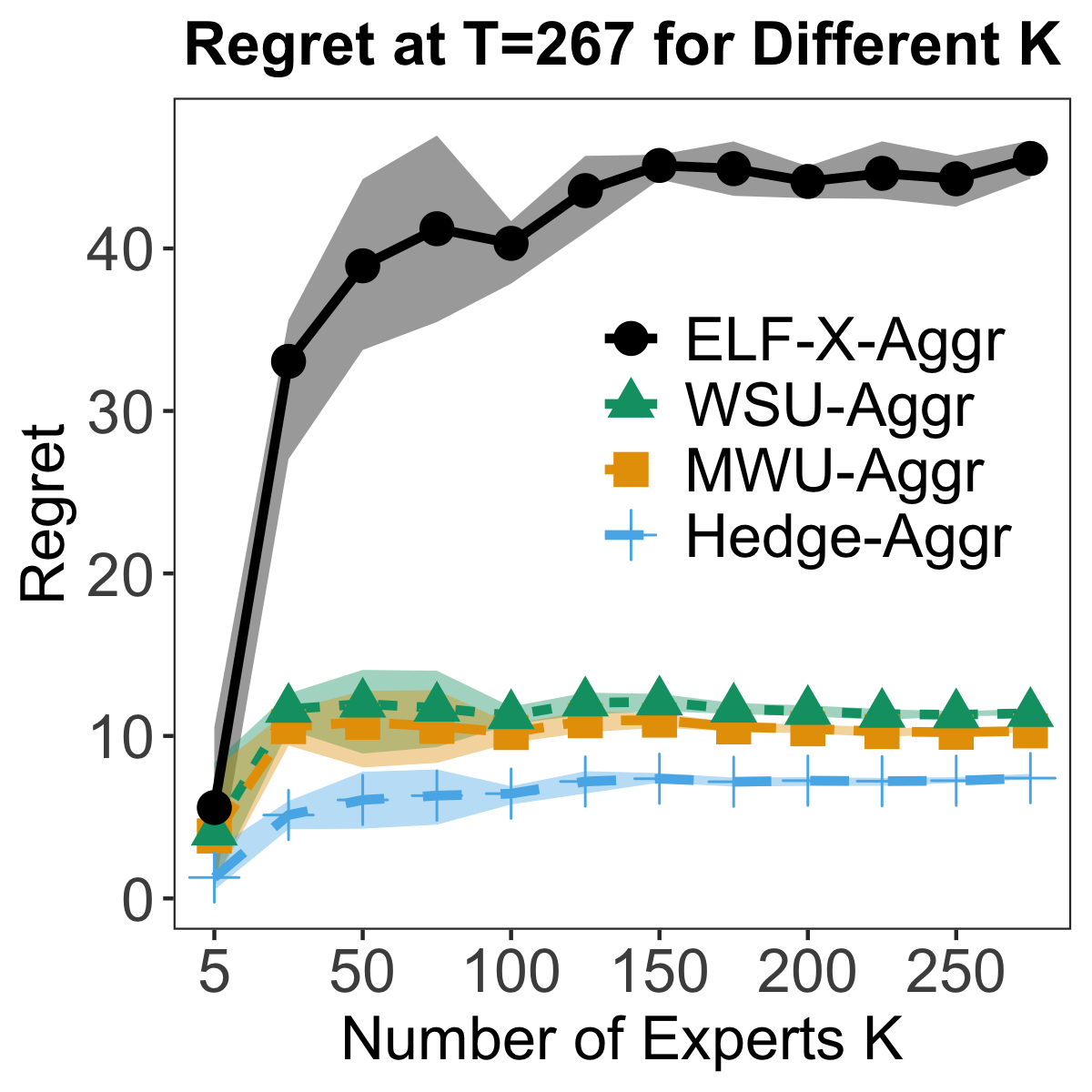}} \\ 
\subfigure{\includegraphics[width=0.3\textwidth]{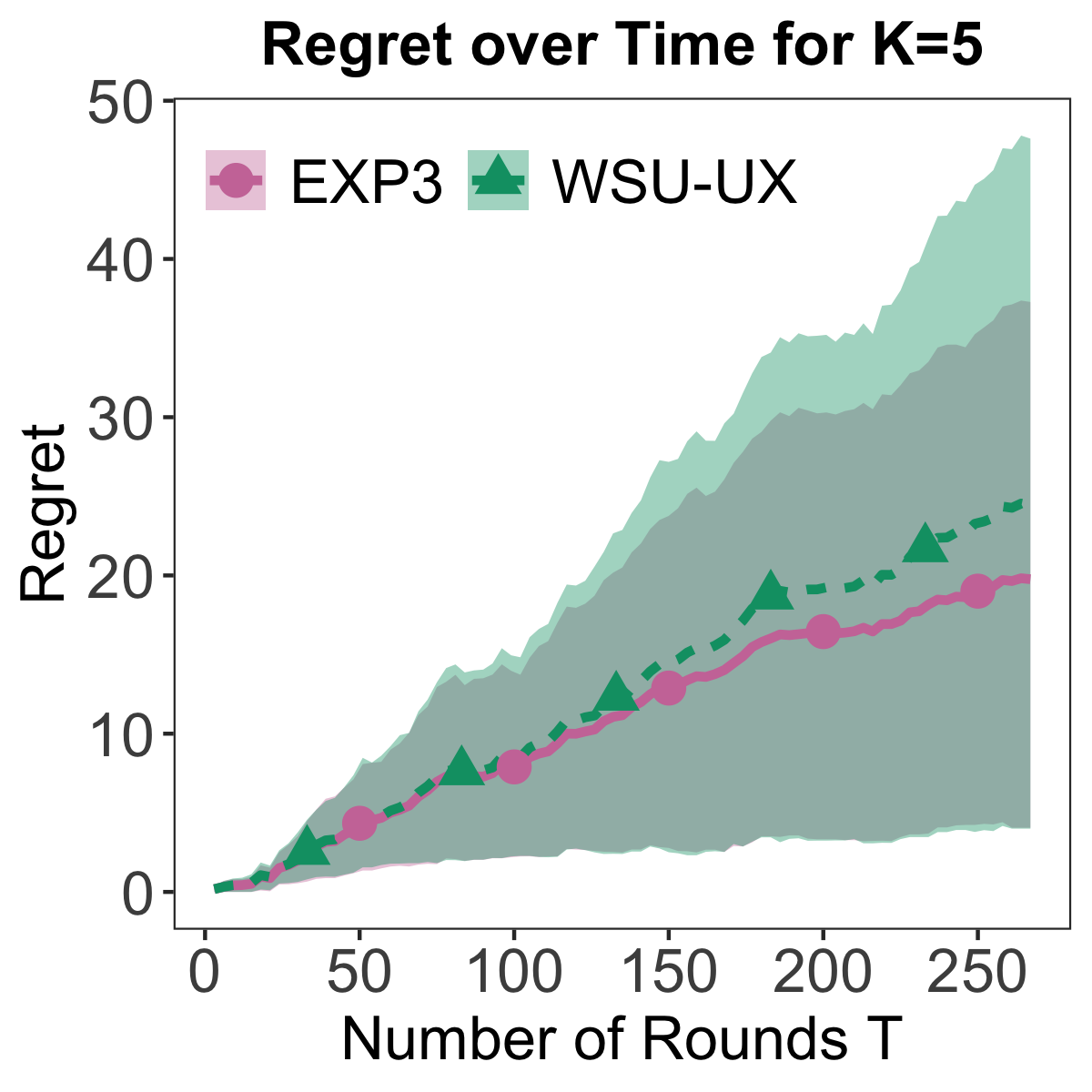}} \hfill
\subfigure{\includegraphics[width=0.3\textwidth]{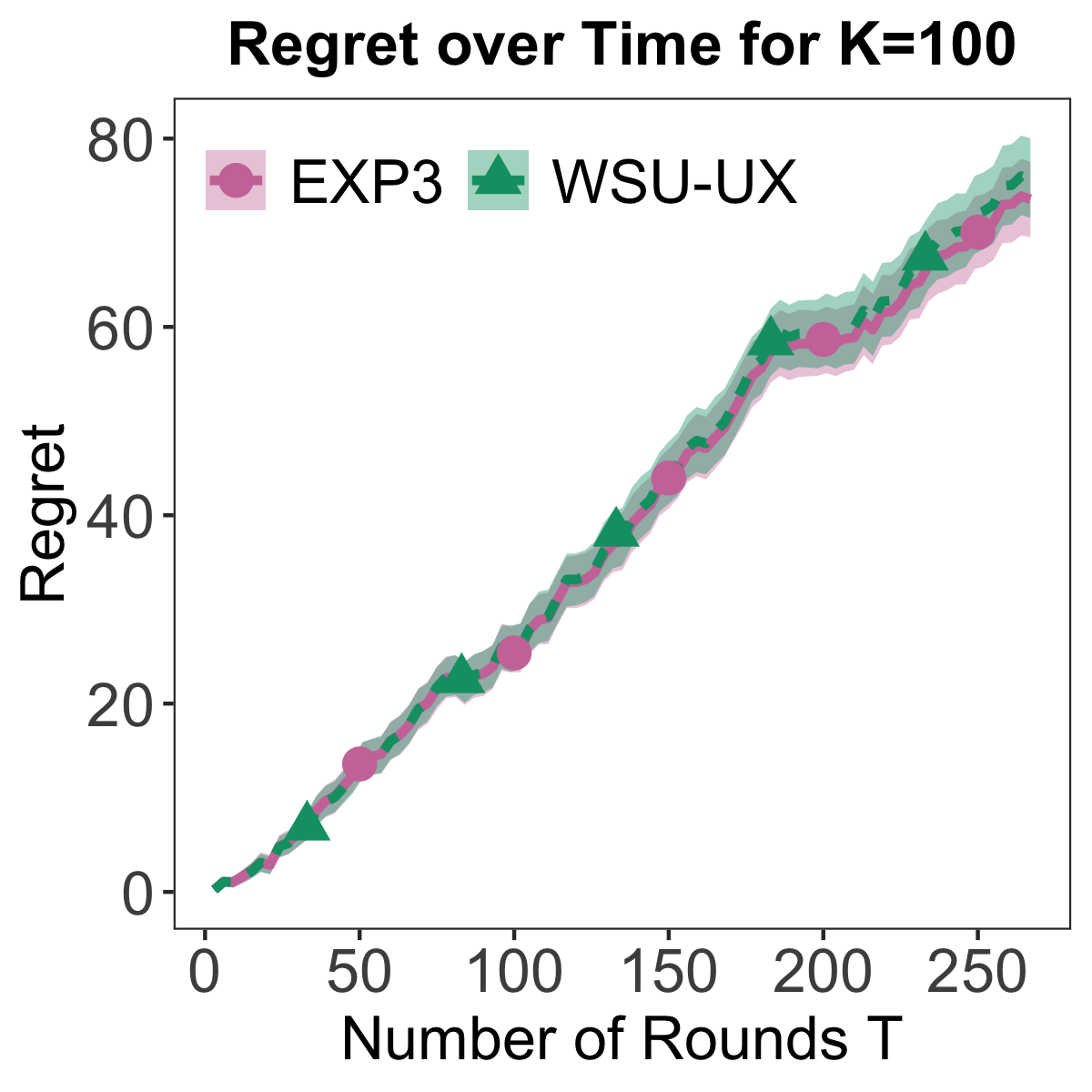}} \hfill
\subfigure{\includegraphics[width=0.3\textwidth]{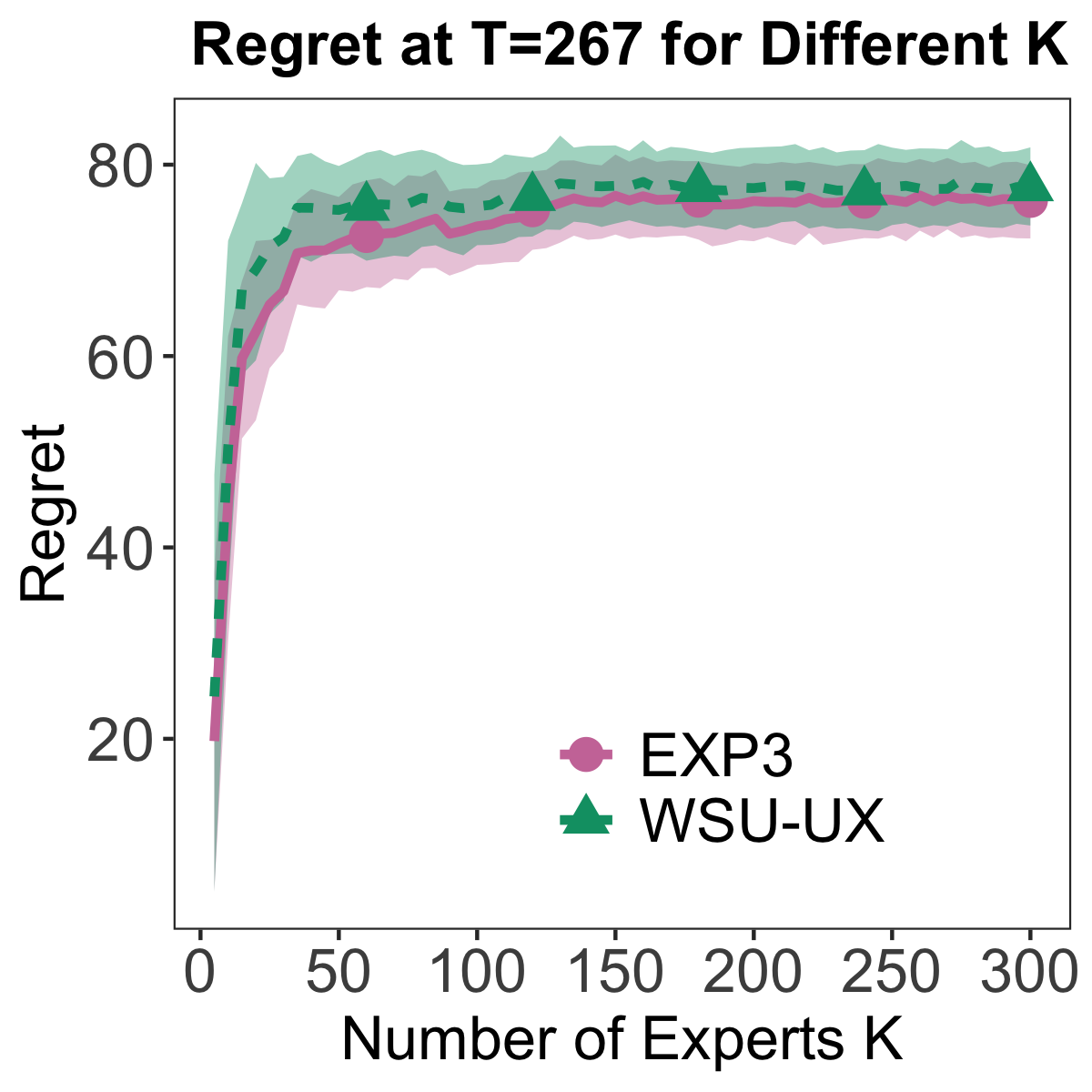}}
\caption{Comparisons on the 2018--2019 FiveThirtyEight NFL dataset. Top: Full-information setting with $\bp_t$ the prediction of a single expert chosen according to $\bpi_t$. Middle: Full-information setting with $\bp_t = \sum_{i \in [K]} \pi_{i,t}p_{i,t}$. Bottom: Partial information setting.}
\label{fig:nfl18-19}
\end{figure*}%

In this section, we empirically evaluate the performance of our proposed incentive-compatible algorithms, $\wsu$ and $\wsux$, compared with standard no-regret algorithms. We also evaluate the performance of $\elfx$, which is incentive-compatible for non-myopic experts. Our code and the datasets we use are publicly available online.\footnote{Code: \small\url{https://github.com/charapod/noregr-and-ic}. Datasets: \url{https://github.com/fivethirtyeight/nfl-elo-game}}

We ran each algorithm on publicly available datasets from a forecasting competition run by FiveThirtyEight\footnote{\url{https://projects.fivethirtyeight.com/2019-nfl-forecasting-game/}} in which users (henceforth called ``forecasters'') make predictions about the outcomes of National Football League (NFL) games. Before each game, FiveThirtyEight releases information on the past performance of the two opposing teams, and forecasters provide probabilistic predictions about which team will win the game. FiveThirtyEight maintains a public leaderboard with the most accurate forecasters, updated after each game. The datasets for the 2018--2019 and 2019--2020 seasons each include all forecasters' predictions, labeled with the forecaster's unique id, information about the corresponding game, and the game's outcome. Each NFL season has a total of 267 games, so in our setting, $T = 267$. For 2018--2019 (respectively, 2019--2020), while 15,702 (15,140) participated, only 302 (375) made predictions for every game. In order to reduce variance, for each value of $K$, we sampled 10 groups of $K$ forecasters from the 302 (respectively, 375), and for each such group, ran each algorithm 50 times.

We evaluate performance using quadratic loss. We compare the cumulative loss of each algorithm against the cumulative loss of the best fixed forecaster in hindsight. For the full information setting, we compare $\wsu$ and $\elfx$ against $\hedge$, which achieves optimal regret guarantees since the quadratic loss is exp-concave, and $\mwu$, which is more similar in form to $\wsu$, in order to evaluate whether anything is lost in terms of regret when incentive compatibility is achieved. For the partial information setting, we compare $\wsux$ against $\texttt{EXP3}$. For each full information algorithm, we run both the variant in which a single expert is selected at each timestep and the variant in which the learner outputs a weighted combination of expert reports (labeled $\texttt{*-Aggr}$). For $\elfxagg$, since $\bpi_t$ cannot be computed in closed form, we approximate it via sampling.

We present the results of our experiments on the 2018--2019 dataset in Figure~\ref{fig:nfl18-19}; the results on the 2019--2020 dataset are in Appendix~\ref{app:1920}, and exhibit similar trends. We note that lines correspond to average regret (across all samples of experts and all repetitions), while the error bands correspond to the 20th and 80th percentiles; this leads to much smaller error bands for larger values of $K$ since the specific sampling of experts has less influence on regret for large $K$. 

Validating our theoretical results, $\wsu$ performs almost identically (in terms of the dependence on both $K$ and $T$) to $\mwu$ when fed the same set of reports---this, of course, does not take into account that $\mwu$ is not incentive compatible and may lead to misreports in practice, potentially degrading predictions. Interestingly, we also see that $\wsuagg$ performs almost identically to $\mwuagg$. This suggests that the performance of $\wsuagg$ is considerably better than the bound in Section~\ref{sec:full-info} implies. It is an interesting open question to see whether better regret guarantees can be proved for $\wsuagg$, perhaps with respect to the best fixed distribution of experts. As expected, both $\wsu$ and $\mwu$ are outperformed by $\hedge$, which achieves optimal regret bounds for squared loss but no incentive guarantees.

$\elfx$ appears to exhibit diminishing regret on this dataset, particularly for $K=5$. However, $\elfx$ and $\elfxagg$ perform worse than $\wsu$ and $\wsuagg$ respectively when fed the same input, particularly when the number of experts is large.  Although $\elfx$ obtains a stronger incentive guarantee, the violations of incentive compatibility for forward-looking experts exhibited by $\wsu$ are very small in our examples. In practice, we expect that $\wsu$ is a superior choice to $\elfx$ when balancing regret and incentive properties, even for forward-looking experts.

For the bandit setting, quite encouragingly, we see that the performance of $\wsux$ is only slightly worse than that of \texttt{EXP3}, and appears significantly better than the $O(T^{2/3})$ regret bound in Section~\ref{sec:bandit} would suggest. This could be a byproduct of our analysis not being tight, and it remains an open question whether this bound can be improved.

The experiments presented in this section focus on settings with relatively small horizons $T$ since an NFL season has only $267$ matches. In Appendix~\ref{app:monte-carlo}, we present our results (which also validate our theoretical analysis) for Monte Carlo simulations for larger horizons.

\section{Conclusion and Open Questions}
\label{sec:future}

We studied the problem of online learning with strategic experts. We introduced algorithms that are simultaneously no-regret and incentive-compatible, and assessed their performance experimentally on data from FiveThirtyEight. Several open questions arise. In the full-information setting, there is the question of whether an incentive-compatible algorithm exists with better regret bounds for the special case of exp-concave bounded proper loss functions. For the bandit setting, there is the question of whether there exist incentive-compatible algorithms that bridge the gap between the regret of $\wsux$ and that of $\texttt{EXP3}$, and whether a better regret guarantee could be proved for $\wsux$ via a tighter analysis. There is also the question of whether $\elfx$ is indeed no-regret, as our experimental results might suggest. More broadly, the most important research question that we believe needs to be addressed in online learning from strategic agents is the quantification of the tradeoff between incentive-incompatibility and standard learning guarantees and how to balance these in practice.

\section*{Acknowledgments}

The authors are grateful to Yiling Chen, Ariel Procaccia, Rob Schapire, Vasilis Syrgkanis, and Jens Witkowski for helpful discussions at different stages of this work, and to the anonymous reviewers for their comments and suggestions.

\bibliographystyle{icml2020}
\bibliography{refs,refs2}
\onecolumn
\newpage
\appendix

\section{Gradient Descent Violates Incentive Compatibility}
\label{sec:gd-counterexample}

In gradient descent the loss function that we are trying to optimize is $(r_t - \sum_{i \in [K]} \pi_{i,t} p_{i,t})^2$. Assume that for all the experts $j \neq i$, $b_{j,t} = p_{j,t} = 0$. Then, from the perspective of expert $i$ and according to their belief $b_{i,t}$ their expected weight at the next round is 
\[ \mathbb{E}_{r_t \sim \Bern(b_{i,t})} \left[ \pi_{i,t+1} \right] = \underbrace{b_{i,t} \cdot \frac{\pi_{i,t} + 2 \eta p_{i,t}(1 - \pi_{i,t}p_{i,t})}{1+2\eta p_{i,t} (1-\pi_{i,t}p_{i,t})}}_{Q_1} + \underbrace{(1-b_{i,t}) \cdot \frac{\pi_{i,t} - 2\eta p_{i,t}^2 \pi_{i,t}}{1 - 2\eta p_{i,t}^2 \pi_{i,t}}}_{Q_2}. \] 
We begin with a specific case: $K = 10, \pi_{i,t} = 0.1, b_{i,t}=0.6$. Then, for any $\eta \ge 2.85 \cdot 10^{-15}$ reporting $p_{i,t} = 0.61$ is a beneficial manipulation for the expert. To construct similar counterexamples for any $\eta$, one needs to focus on cases where $\pi_{i,t} \to 0$ (which can be achieved by, for instance, allowing the number of experts to grow large), hence $Q_2$ is almost $0$ and $Q_1$ ends up thus being maximized when $p_{i,t}$ is maximum (i.e., for $p_{i,t} \to 1$).

\section{Supplementary Material for Section~\ref{sec:full-info}}

\subsection{Proof of the Validity of $\wsu$}

\begin{lemma}\label{lem:prop-distr-wsu}
The weights $\bpi_{t}$ produced by $\wsu$ form a well-defined probability distributions for all $t \in [T+1]$.
\end{lemma}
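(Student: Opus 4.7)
The plan is to prove this by a straightforward induction on $t$, leveraging two basic properties of the underlying $\WSWM$: nonnegativity of its payoffs and budget balance. The base case is immediate since $\pi_{i,1} = 1/K$ for all $i$, so $\bpi_1$ is trivially a valid distribution.

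For the inductive step, I would assume $\bpi_t$ is a valid probability distribution and show both that (i) each $\pi_{i,t+1} \geq 0$ and (ii) $\sum_{i \in [K]} \pi_{i,t+1} = 1$. For (i), I would use the fact that the loss function $\ell$ is bounded in $[0,1]$ together with the inductive hypothesis $\pi_{j,t} \geq 0$ for all $j$, which ensures that the $\WSWM$ payment
\[
\Gamma^{\WSWM}_i(\vp_t, \bpi_t, r_t) = \pi_{i,t}\left(1 - \ell(p_{i,t}, r_t) + \sum_{j \in [K]} \pi_{j,t} \ell(p_{j,t}, r_t)\right)
\]
is a product of nonnegative quantities, hence nonnegative. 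Since $\eta \in (0, 1/2]$ and $(1-\eta) \pi_{i,t} \geq 0$, the convex combination $\pi_{i,t+1} = \eta \Gamma^{\WSWM}_i(\vp_t, \bpi_t, r_t) + (1-\eta)\pi_{i,t}$ is nonnegative.

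For (ii), I would invoke the budget-balance property of the $\WSWM$, which gives $\sum_{i \in [K]} \Gamma^{\WSWM}_i(\vp_t, \bpi_t, r_t) = \sum_{i \in [K]} \pi_{i,t} = 1$ by the inductive hypothesis. Summing the update rule over $i$ then yields
\[
\sum_{i \in [K]} \pi_{i,t+1} = \eta \sum_{i \in [K]} \Gamma^{\WSWM}_i(\vp_t, \bpi_t, r_t) + (1-\eta) \sum_{i \in [K]} \pi_{i,t} = \eta + (1-\eta) = 1.
\]

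There is no real obstacle here; the only subtlety is recognizing that the wagers used in the $\WSWM$ step at time $t$ are the current weights $\bpi_t$, so budget balance gives back exactly $1$ rather than some other normalization constant. The fact that $\wsu$ mixes the $\WSWM$ payout with the current distribution using weight $(1-\eta)$ preserves both nonnegativity and the sum-to-one property because both $\bpi_t$ and the $\WSWM$ output are (by induction and budget balance) probability distributions.
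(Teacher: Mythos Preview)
Your proof is correct and follows essentially the same inductive argument as the paper: establish the base case from the uniform initialization, then use nonnegativity of the $\WSWM$ payoff (together with $\eta\in(0,1/2]$ and $\pi_{i,t}\ge 0$) for the nonnegativity of $\pi_{i,t+1}$, and budget balance of $\WSWM$ for the sum-to-one property. The only cosmetic difference is that you spell out why the $\WSWM$ payment is nonnegative via the explicit formula, whereas the paper simply cites ``properties of $\WSWM$''.
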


\begin{proof}
To show that a distribution is valid, we must show that the components are non-negative and sum to one. We do this inductively. The base case is satisfied trivially since $\pi_{i,1} = 1/K$ for all $i$. Now assume that $\bpi_t$ is a valid probability distribution.
For $t+1$, from Equation~\ref{eqn:wswm-update}, we have
$
\pi_{i,t+1} \geq \eta \Gamma^{\WSWM}_i(\vp_t, \bpi_t, r_t) 
            \geq 0
$
where the last inequality follows from the properties of $\WSWM$ and the assumption that $\bpi_t$ is a valid distribution.
We also have
\begin{align*}
	\sum_{i \in [K]} \pi_{i,t+1} &= \eta \sum_{i \in [K]} \Gamma^{\WSWM}_i (\vp_t,\bpi_t, r_t) + (1-\eta) \sum_{i \in [K]} \pi_{i,t} 
	= \eta \sum_{i \in [K]} \pi_{i,t} + (1-\eta) \sum_{i \in [K]} \pi_{i,t} =1
\end{align*}
where the second equality follows from the fact that $\WSWM$ is budget balanced and the final equality from the assumption that $\bpi_t$ is a valid distribution.
\end{proof}

\subsection{Technical Lemma}

\begin{lemma}\label{lem:technical}
For all $x \leq 1/2$, it holds that: $\ln (1 -x) \geq -x -x^2$.
\end{lemma}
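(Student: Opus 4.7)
The plan is to reduce the inequality to a one-variable calculus exercise by defining $f(x) := \ln(1-x) + x + x^2$ on the domain $(-\infty, 1/2]$ and showing $f(x) \geq 0$ throughout, which is exactly equivalent to the claim.

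First I would differentiate $f$ and put the result over a common denominator, yielding
\[
f'(x) \;=\; -\frac{1}{1-x} + 1 + 2x \;=\; \frac{-1 + (1-x)(1+2x)}{1-x} \;=\; \frac{x(1-2x)}{1-x}.
\]
The factorization is the crux of the argument: it exposes $x=0$ and $x=1/2$ as the only critical points in the region of interest and explains why the constant $1/2$ is precisely the right threshold in the hypothesis.

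Next I would read off the sign of $f'(x)$ on $(-\infty, 1/2]$. On this interval, $1 - x > 0$, and $1 - 2x \geq 0$ by hypothesis, so $f'(x)$ has the same sign as $x$. Hence $f$ is non-increasing on $(-\infty, 0]$ and non-decreasing on $[0, 1/2]$, so $f$ attains its minimum on $(-\infty, 1/2]$ at $x = 0$. Computing $f(0) = \ln(1) + 0 + 0 = 0$ gives $f(x) \geq 0$ for all $x \leq 1/2$, which rearranges to the claimed inequality $\ln(1-x) \geq -x - x^2$.

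There is no real obstacle here; the proof is essentially a sign analysis of a rational function. The only point worth double-checking is the algebraic simplification of $f'(x)$, since it is the step that makes the monotonicity argument transparent and explains the role of the constraint $x \leq 1/2$.
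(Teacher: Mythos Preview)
Your proof is correct and essentially identical to the paper's: both define $f(x)=\ln(1-x)+x+x^2$, compute $f'(x)=\dfrac{x(1-2x)}{1-x}$ (the paper writes it as $\dfrac{-x(2x-1)}{1-x}$), read off the monotonicity on $(-\infty,0]$ and $[0,1/2]$, and conclude from $f(0)=0$.
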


\begin{proof}
Let function $f(x), x \leq 1/2$ be defined as $f(x) = \ln (1-x) + x + x^2$.  It suffices to show that $f(x) \geq 0$ for the domain of interest. Taking the first derivative we get
 $$f'(x) = \frac{-x(2x-1)}{1-x}.$$
For $x\leq 1/2$, $f'(x) = 0$ for $x = 0$ and $x = 1/2$. Now, since $f'(x) \leq 0, x \leq 0$ and $f'(x) \geq 0, 0 \leq x \leq 1/2$ we get that $f(x)$ is decreasing for $x \in (-\infty, 0]$ and increasing for $x \in [0, 1/2]$. As such, it presents a minimum at $x = 0$, and for $x \leq 1/2$, $f(x) \geq f(0) = \ln (1) + 0 + 0 = 0.$ Hence, $\ln (1-x) \geq -x -x^2$. 
\end{proof}

\subsection{Regret of $\wsu$ for Unknown Time Horizon $T$}\label{app:anytime-wsu}

In order to provide an anytime variant of $\wsu$, we use a standard doubling trick \citep{ACBFS02}. We maintain an estimated upper bound on the time horizon $T$, denoted $n$, starting with $n=1$. For all rounds $t \in (n/2, n]$, we run $\wsu$ using $\eta = \eta_n = \sqrt{\ln (K)/n}$. If at any round $t'$ we have that $t' > n$, then we double our estimated horizon upper bound to $2n$ (changing $\eta$ accordingly) and restart $\wsu$ by initializing all weights to $1/K$. As we prove below, this process increases the regret only by constants.

\begin{lemma}
For an a-priori unknown time horizon $T$, $\wsu$ with a doubling trick is incentive-compatible and incurs regret $R \leq \frac{2\sqrt{2}}{\sqrt{2} - 1} \sqrt{T \ln K}$.
\end{lemma}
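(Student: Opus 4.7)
The plan is to decompose the unknown horizon $T$ into the doubling epochs $n \in \{1, 2, 4, \ldots, 2^{K^*}\}$ described in the lemma statement, where $K^*$ is the smallest integer with $2^{K^*} \geq T$. For each $n = 2^k$, $\wsu$ is restarted at round $n/2+1$ and executed with step size $\eta_n = \sqrt{\ln K/n}$ on the at most $n/2$ rounds of that epoch. The key observation is that each run of $\wsu$ within an epoch is a self-contained instance against the known horizon upper bound $n/2$, so Theorem~\ref{thm:no-regr-myopic2} (and the intermediate inequality $R \leq \ln K/\eta + \eta T$ derived in its proof) applies verbatim to each epoch.

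Within epoch $n$, let $i_n^\star$ denote the expert with minimum cumulative loss over just that epoch. Applying the intermediate bound with local horizon $n/2$ and step $\eta_n = \sqrt{\ln K / n}$ gives an epoch-local regret of the form $c \sqrt{n \ln K}$ for a constant $c$ (the two terms $\ln K/\eta_n$ and $\eta_n \cdot (n/2)$ are both proportional to $\sqrt{n \ln K}$). The overall regret of the anytime algorithm against the global best expert $i^*$ is at most $\sum_{k} \bigl[\sum_{t \in \text{epoch}(2^k)} \ell_t - \sum_{t \in \text{epoch}(2^k)} \ell_{i^\star,t}\bigr]$, and by the standard observation that $\sum_{t \in \text{epoch}(2^k)} \ell_{i_n^\star,t} \leq \sum_{t \in \text{epoch}(2^k)} \ell_{i^*,t}$, this is bounded above by the sum of the epoch-local regret bounds, $\sum_{k=0}^{K^*} c\sqrt{2^k \ln K}$.

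The remaining step is a geometric-series calculation: $\sum_{k=0}^{K^*} \sqrt{2^k} = \frac{(\sqrt{2})^{K^*+1}-1}{\sqrt{2}-1} \leq \frac{\sqrt{2}\cdot\sqrt{2^{K^*}}}{\sqrt{2}-1}$. Combining this with the minimality bound $2^{K^*} \leq 2T$ and carefully tracking the per-epoch constant $c$ yields the claimed $\frac{2\sqrt{2}}{\sqrt{2}-1}\sqrt{T\ln K}$ bound (up to matching the exact numerical factor).

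Incentive compatibility is essentially free: inside every epoch, $\wsu$ is incentive-compatible by Theorem~\ref{thm:no-regr-myopic2}, and both the epoch boundaries and the choice of $\eta_n$ are deterministic functions of $t$ alone, independent of any expert's reports, so no strategic report can shift the epoch structure or step size. The main obstacle in the execution is arithmetic bookkeeping rather than any conceptual novelty: one must carefully reconcile the per-epoch constant (which is slightly loose because $\eta_n = \sqrt{\ln K/n}$ is tuned to horizon $n$ rather than the actual epoch length $n/2$) with the tight constant claimed in the lemma, and handle the last, possibly incomplete, epoch when $T$ is not a power of $2$. Both are standard manipulations for doubling-trick proofs.
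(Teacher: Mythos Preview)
Your proposal is correct and follows essentially the same approach as the paper: apply Theorem~\ref{thm:no-regr-myopic2} to each doubling epoch, sum the resulting $O(\sqrt{2^k \ln K})$ bounds via a geometric series, and observe that incentive compatibility is inherited because the epoch schedule and step size are report-independent. The only differences are cosmetic bookkeeping---the paper indexes phases up to $\lfloor \log T \rfloor$ and uses $2^{\lfloor \log T \rfloor} \leq T$ directly (with the looser per-phase constant $2$ from Theorem~\ref{thm:no-regr-myopic2}), whereas you index up to $K^*$ and use $2^{K^*} \leq 2T$; as you note, reconciling these to hit the exact constant $\tfrac{2\sqrt{2}}{\sqrt{2}-1}$ is purely arithmetic.
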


\begin{proof}
Using the doubling trick, the time horizon $T$ can be divided into phases during which $n$, and hence also $\eta$, remain constant. Because of this, from the perspective of an expert $i$, it does not matter in which phase the algorithm is currently at: their probability at the next round is computed as $\pi_{i,t+1} = \eta_n \Gamma_i^{\texttt{WSWM}}(\vp_t, \bpi_t,r_t) + (1 - \eta_n) \pi_{i,t}$, hence it still is a convex combination of a $\texttt{WSWM}$ payment and $\pi_{i,t}$, which cannot be influenced by $i$'s report at round $t$. Since the algorithm every time restarts (i.e., experts' weights are re-initialized to $1/K$) using the new $\eta_n$ for all the rounds, this ends up being equivalent to having a constant $\eta$ throughout $T$ timesteps in terms of incentives. 

Since the length of each phase, $n$, is doubled at the end of each phase, the number of these phases is at most $\lceil \log T \rceil$. Additionally, the actual regret throughout the $T$ rounds is upper-bounded by the sum of the regret of each phase. Hence, using Theorem~\ref{thm:no-regr-myopic2} we have that: 
\begin{align*}
R &\leq \sum_{n=0}^{\lfloor \log T \rfloor} 2 \sqrt{2^n \ln K} \leq \left(2 \sqrt{\ln K}\right)\sum_{n=0}^{\lfloor \log T \rfloor} \left(\sqrt{2}\right)^n \\ 
&= \left(2 \sqrt{\ln K}\right)\frac{1 - \sqrt{2}^{\lfloor \log T \rfloor + 1}}{1 - \sqrt{2}} \\  
&= \left(2 \sqrt{\ln K}\right)\frac{2^{\frac{1}{2} \lfloor \log T \rfloor} \cdot \sqrt{2} - 1}{\sqrt{2} -1}\\
&\leq \left(2 \sqrt{\ln K}\right)\frac{2^{\left \lfloor \log T^{1/2} \right \rfloor} \cdot \sqrt{2}}{\sqrt{2} -1}\\
&= \left(2 \sqrt{2} \sqrt{\ln K}\right)\frac{T^{1/2}}{\sqrt{2} -1}= \frac{\left(2 \sqrt{2} \sqrt{T \ln K}\right)}{\sqrt{2}-1}
\end{align*}
where the first equality comes from the definition of a geometric series with rate $\sqrt{2}$.
This concludes our proof. 
\end{proof}

\section{Supplementary Material for Section~\ref{sec:bandit}}\label{app:bandit-known-T}

\subsection{Proof of Lemma~\ref{lem:unbiased}}
For the first moment, we have: 
$$\E_{I_t \sim \tpi_t} \left[ \hell_{i,t} \right] = \sum_{j \in [K]} \tpi_{j,t} \frac{\ell_{i,t}\1 \{ j \!=\! i\}}{\tpi_{i,t}} = \ell_{i,t}.$$ 
For the second moment, we have: 
$$\E_{I_t \sim \tpi_t} \left[ \hell_{i,t}^2 \right] = \sum_{j \in [K]} \tpi_{j,t} \frac{\ell_{i,t}^2 \1 \{i = j\}}{\tpi_{i,t}^2} = \frac{\ell_{i,t}^2}{\tpi_{i,t}} \leq \frac{1}{\tpi_{i,t}},$$
where the last inequality uses the fact that $\ell_{i,t} \in [0,1], \forall i \in [K], \forall t \in [T]$.
\qed

\subsection{Regret of $\wsux$ for Unknown Time Horizon $T$}\label{app:anytime-wsux}

Similarly to Appendix~\ref{app:anytime-wsu}, in this subsection we use the doubling trick \citep{ACBFS02} in order to achieve regret guarantees for $\wsux$ for the case of an unknown horizon $T$. Formally, we prove the following.

\begin{lemma}
For an a-priori unknown time horizon $T$, $\wsux$ with a doubling trick is incentive-compatible and incurs regret $R \leq \frac{8}{2^{2/3} - 1} T^{2/3} (K \ln K)^{1/3}$.
\end{lemma}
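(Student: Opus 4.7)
The plan is to mirror the doubling-trick argument given for $\wsu$ in the previous subsection, but adapted to handle both parameters $\eta$ and $\gamma$ of $\wsux$. I would partition the unknown horizon into phases indexed by $k = 0, 1, 2, \ldots$, where phase $k$ consists of rounds $t \in (2^{k-1}, 2^k]$. Within phase $k$, a fresh instance of $\wsux$ is launched, re-initializing all weights to $1/K$ and using the parameters $\eta_k = \left( \ln(K) / (4 K^{1/2} 2^k) \right)^{2/3}$ and $\gamma_k = \left( K \ln(K) / (4 \cdot 2^k) \right)^{1/3}$ prescribed by Theorem~\ref{thm:regr-bandit} with horizon $2^k$.

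Incentive compatibility should come almost for free. Within a single phase the update rule of $\wsux$ is unchanged (only the parameter values differ), so Theorem~\ref{thm:regr-bandit} directly implies that truthful reporting weakly maximizes $\E[\pi_{i,t+1}]$. At a boundary round, the algorithm restarts and every expert's next-round weight is reset to $1/K$ regardless of the current report, so no misreport can help there either. Combining both cases gives IC for the entire doubling-based procedure.

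For the regret, I would upper bound the total regret against the globally best expert by the sum over phases of the regret incurred in each phase (the per-phase best expert is at least as good as the global one). Applying Theorem~\ref{thm:regr-bandit} with horizon $2^k$ bounds the phase-$k$ regret by $2(4 \cdot 2^k)^{2/3} (K \ln K)^{1/3}$. Summing this geometric series over $k = 0, \ldots, \lfloor \log T \rfloor$ and using the identity $2 \cdot 4^{2/3} \cdot 2^{2/3} = 2 \cdot 2^{4/3+2/3} = 8$ exactly as in the $\wsu$ calculation yields the claimed bound $R \leq \tfrac{8}{2^{2/3} - 1} T^{2/3} (K \ln K)^{1/3}$.

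The one real obstacle is that Theorem~\ref{thm:regr-bandit} carries the precondition $T \geq K \ln K$, needed to guarantee $\eta K/\gamma \leq 1/2$ so that Lemma~\ref{lem:prop-distr} produces valid distributions. For short phases with $2^k < K \ln K$ the precondition fails. I would handle this by deferring the doubling process: during the initial $k_0 = \lceil \log (K \ln K) \rceil$ phases I would simply sample experts uniformly at random (which is trivially IC, since experts' reports do not affect the uniform distribution), incurring at most an additive $O(K \ln K)$ regret that is dominated by the leading $T^{2/3} (K \ln K)^{1/3}$ term and absorbed into the constant. From phase $k_0$ onward the precondition holds and the above geometric-sum argument proceeds unchanged, completing the proof plan.
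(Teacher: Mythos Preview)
Your proposal is correct and follows essentially the same doubling-trick argument as the paper: partition into geometrically growing phases, restart $\wsux$ with phase-appropriate parameters, observe that IC is inherited phase-by-phase (and trivially holds at restarts), and sum the per-phase regret bounds from Theorem~\ref{thm:regr-bandit} as a geometric series to obtain the stated constant.

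You are in fact more careful than the paper on one point: the paper's proof applies Theorem~\ref{thm:regr-bandit} to every phase without checking the precondition $2^k \ge K\ln K$, whereas you flag this and patch it by sampling uniformly during the short early phases. Your fix is sound (uniform sampling is IC and contributes at most $\min\{T,\,O(K\ln K)\}$ regret, which is indeed $O(T^{2/3}(K\ln K)^{1/3})$ in either regime), though strictly speaking it perturbs the leading constant away from exactly $8/(2^{2/3}-1)$; the paper simply ignores this issue.
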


\begin{proof}
Algorithm $\wsux$ is divided into phases during which $n$ and $\eta$ remain constant. This coupled with the fact that at every phase the algorithm is restarted and the experts' weights are re-initialized to $1/K$ (i.e., hence all previous weights have been updated with the same $\eta$) means that from the perspective of an expect, the incentives structure remains the same. As a result, $\wsux$ with a doubling trick is incentive-compatible.

The number of the algorithm's phases is at most $\lfloor \log T \rfloor$. The actual regret throughout the $T$ rounds is upper bounded by the sum of the regret of each phase. So, from Theorem~\ref{thm:regr-bandit} we obtain that: 
\begin{align*}
R   &\leq \sum_{n=0}^{\lfloor \log T \rfloor} 2 \cdot 4^{2/3} \cdot (K \ln K)^{1/3} \cdot \left(2^n\right)^{2/3} = 2 \cdot 4^{2/3} \cdot (K \ln K)^{1/3} \sum_{n=0}^{\lfloor \log T \rfloor} \left( 2^{2/3} \right)^n \\
    &=  2 \cdot 4^{2/3} \cdot (K \ln K)^{1/3} \frac{1 - \left(2^{2/3}\right)^{\lfloor \log T \rfloor + 1}}{1 - 2^{2/3}} \leq 2 \cdot 2^{2/3} \cdot 4^{2/3}\cdot (K \ln K)^{1/3}\frac{\left(2^{2/3}\right)^{\lfloor \log T \rfloor}}{2^{2/3}-1} \\ 
    &= 2 \cdot 2^{2/3} \cdot 4^{2/3}\cdot (K \ln K)^{1/3}\frac{\left(2\right)^{\frac{2}{3}\lfloor \log T \rfloor}}{2^{2/3}-1} =  \frac{8}{2^{2/3} - 1}(K \ln K)^{1/3}T^{2/3}
\end{align*}
This concludes our proof.
\end{proof}

\section{Supplementary Material for Section~\ref{sec:elf}}

We begin with a definition of incentive compatibility when experts may look more than one timestep into the future. This stronger version of incentive compatibility requires that for any timestep $t$ and future timestep $t^f>t$, experts maximize their expected weight at timestep $t^f$ by truthfully reporting their beliefs at all timesteps between $t$ and $t^f$.

\begin{definition}[Incentive Compatibility for Forward-Looking Experts]\label{def:strong-ic}
An online learning algorithm is \emph{incentive-compatible for forward-looking experts} if for every timestep $t \in [T]$ and every future timestep $t^f >t$, every expert $i$ with beliefs $(b_{i,t'})_{t \le t'<t^f}$, and every set of reports of expert $i$, $(p_{i,t'})_{t \le t'<t^f}$, reports of the other experts $(\vp_{-i,t'})_{t \le t'<t^f}$, and every history of reports $(\vp_{t''})_{t''<t}$ and outcomes $(r_{t''})_{t''<t}$,
\begin{align*}
&\E_{(r_{t'} \sim \Bern(b_{i,t'}))_{t \le t'<t^f}} [ \pi_{i,t^f} |  \left(b_{i,t'} \right)_{t \le t'<t^f} , \left( \vp_{-i,t'} \right)_{t \le t'<t^f},(\vp_{t''})_{t''<t}, (r_{t''})_{t''<t} ] \\
&\geq \E_{(r_{t'} \sim \Bern(b_{i,t'}))_{t \le t'<t^f}} [ \pi_{i,t^f} |  \left( p_{i,t'} \right)_{t \le t'<t^f} , \left( \vp_{-i,t'} \right)_{t \le t'<t^f}, (\vp_{t''})_{t''<t}, (r_{t''})_{t''<t}] .
\end{align*}
\end{definition}

$\wsu$ and $\wsux$ do not satisfy incentive compatibility for forward-looking experts. We present an example for $\wsu$, but note that adding a small amount of uniform exploration will still yield a violation. Observe also that the incentives to deviate in the following example are very small. It is an open problem whether $\wsu$ can sometimes produce larger incentives to misreport, or, conversely, whether it satisfies some notion of $\epsilon$-incentive compatibility.

\begin{theorem}
	$\wsu$ is not incentive-compatible for forward-looking experts.
\end{theorem}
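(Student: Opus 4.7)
The plan is to exhibit a small concrete counterexample with $K=2$ and $t^f=t+2$, showing that truthful reporting at round $t$ strictly fails to maximize $\E[\pi_{1,t+2}]$. I would take quadratic loss, $\pi_{1,t}=\pi_{2,t}=1/2$, expert 1's belief $b_{1,t}=1/2$, and expert 2 reporting truthfully $p_{2,t}=0$; at round $t+1$ I would set $b_{1,t+1}=1$ (so $r_{t+1}=1$ a.s.) and have expert 2 again report $0$, with expert 1 playing truthfully at $t+1$ by reporting $1$. The expert's round-$t$ report $p\equiv p_{1,t}$ is the only free parameter, and the claim is that $\E[\pi_{1,t+2}]$ is strictly larger at some $p\ne 1/2$ than at the truthful report.

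The main computation applies the $\wsu$ update twice and tracks the $p$-dependence. Since $r_{t+1}$ and all round-$(t+1)$ predictions are fixed, the round-$(t+1)$ update reduces to $\pi_{1,t+2}=\pi_{1,t+1}\bigl(1+\eta-\eta\pi_{1,t+1}\bigr)$. Substituting the round-$t$ update $\pi_{1,t+1}=\tfrac12\bigl(1-\tfrac{\eta}{2}\delta\bigr)$ with $\delta=(p-r_t)^2-r_t^2$ and expanding, the $O(\eta^2)$ cross-terms cancel and one obtains
\[
\pi_{1,t+2}=\tfrac12\bigl(1+\tfrac{\eta}{2}\bigr)-\tfrac{\eta}{4}\delta-\tfrac{\eta^3}{16}\delta^2 .
\]
Taking expectation over $r_t\sim\Bern(1/2)$ turns the right-hand side into an explicit polynomial in $p$ (with $\E[\delta]=p^2-p$ and $\E[\delta^2]=p^4-2p^3+2p^2$), whose derivative at $p=1/2$ evaluates to $-\eta^3/16<0$: the derivative of the first-order term vanishes at the truthful point, reflecting myopic IC, so the sign is determined by the $\delta^2$ contribution alone.

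A strict inequality then follows by picking any $p$ slightly below $1/2$, giving the desired violation of Definition~\ref{def:strong-ic} at $t^f=t+2$. There is no real mathematical obstacle; the only nontrivial design choice is arranging the round-$(t+1)$ setup so that the higher-order correlation between the round-$t$ loss and the round-$(t+1)$ update does not wash out under expectation. A deterministic $r_{t+1}$ together with the asymmetry between the two experts' reports at $t+1$ keeps this correlation alive, and since the resulting gain scales only as $\eta^3$, the counterexample is also consistent with the paper's footnote that the incentive to deviate is very small, leaving open the $\epsilon$-IC question flagged there.
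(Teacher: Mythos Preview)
Your proposal is correct. The setup and the two applications of the $\wsu$ update check out: with $\pi_{1,t+1}=\tfrac12(1-\tfrac{\eta}{2}\delta)$ and the deterministic round $t+1$ you chose, the product indeed collapses to $\pi_{1,t+2}=\tfrac12(1+\tfrac{\eta}{2})-\tfrac{\eta}{4}\delta-\tfrac{\eta^3}{16}\delta^2$, and the derivative of $\E[\pi_{1,t+2}]$ at the truthful report $p=\tfrac12$ is exactly $-\eta^3/16<0$. This gives a valid violation of Definition~\ref{def:strong-ic} for every $\eta\in(0,\tfrac12]$; starting at $t=1$ ensures the equal-weight initialization $\pi_{1,t}=\pi_{2,t}=\tfrac12$.

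The paper's own proof follows the same overall strategy---a $K=2$, two-round counterexample targeting $\pi_{1,3}$---but executes it numerically: it fixes $b_{1,1}=0.7$, $b_{1,2}=0.6$, $b_{2,1}=0.4$, $b_{2,2}=0$, expands $\E[\pi_{1,3}]$ as a polynomial in $\eta$ for both the truthful report and a specific deviation $p_{1,1}=0.699$, and compares coefficients to conclude the deviation wins for $\eta>0.0703$ (with a remark that smaller $\eta$ are handled by deviations closer to $0.7$). Your derivative argument is more structural: by choosing $b_{1,t+1}=1$ and extremal reports at round $t+1$, you force the $O(\eta^2)$ cross terms to cancel exactly, isolating the $\eta^3\delta^2$ term and making the nonzero derivative at the truthful point transparent. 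This yields the result for all $\eta$ in one stroke and makes the $\Theta(\eta^3)$ scaling of the incentive to deviate explicit, dovetailing with the paper's observation that the gains are tiny and the open $\eps$-IC question. The paper's numerical instance, on the other hand, shows that the phenomenon is not an artifact of degenerate parameters (belief $1$, report $0$), so the two examples are complementary.
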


\begin{proof}
	Let $K=2$, $T=3$, and $b_{1,1}=0.7$, $b_{1,2}=0.6$, $b_{2,1}=0.4$, and $b_{2,2}=0$. If both experts report truthfully at both rounds, it can be checked that the expected weight of expert 1 at timestep 3 is $\E_{r_1 \sim \Bern(b_{1,1}), r_2 \sim \Bern(b_{1,2})}[\pi_{1,3}] = 0.5+0.1125\eta -0.00188325\eta^2$. However, if expert one instead reports $p_{1,1}=0.699$, then his expected weight at timestep 3 is $\E_{r_1 \sim \Bern(b_{1,1}), r_2 \sim \Bern(b_{1,2})}[\pi_{1,3}]=0.5+0.112499944\eta-0.0018719238\eta^3$.
	It is easy to check that the latter is larger than the former for all $\eta>0.0703$.

	For ease of presentation we do not present a possible manipulation for smaller values of $\eta$, but note that such manipulations can be obtained by considering $0.699<p_{1,1}<0.7$.
\end{proof}

For completeness, we include here some discussion as to the distinction between our $\elfx$ algorithm and the $\elf$ algorithm of \citet{WFVPK18}, who designed $\elf$ for selecting the winner of a forecasting competition.

$\elf$ works similarly to $\elfx$ as defined in Section~\ref{sec:elf}, except that the ``winner'' $x_{\tau}$ of each round $\tau \in [t]$ is chosen with probability $\frac{1}{K}\left( 1 - \ell_{i,t'} + \frac{1}{K-1}\sum_{j \in [K] \setminus \{ i \} } \ell_{j,t'}  \right)$.

Unfortunately, direct application of $\elf$ in the online learning settings we are considering in this paper yields an algorithm with linear regret in the worst case. In particular, when there are two experts and the reports of each expert are always either 0 or 1, $\elf$ reduces to the Follow-the-Leader algorithm that, at every timestep, selects the expert with the lowest cumulative loss. It is well known that Follow-the-Leader has linear regret even under this restriction. $\elfx$ avoids this problem by adding additional randomness into the selection of each round's winner.

\begin{figure*}[t!]
\centering
\subfigure{\includegraphics[width=0.3\textwidth]{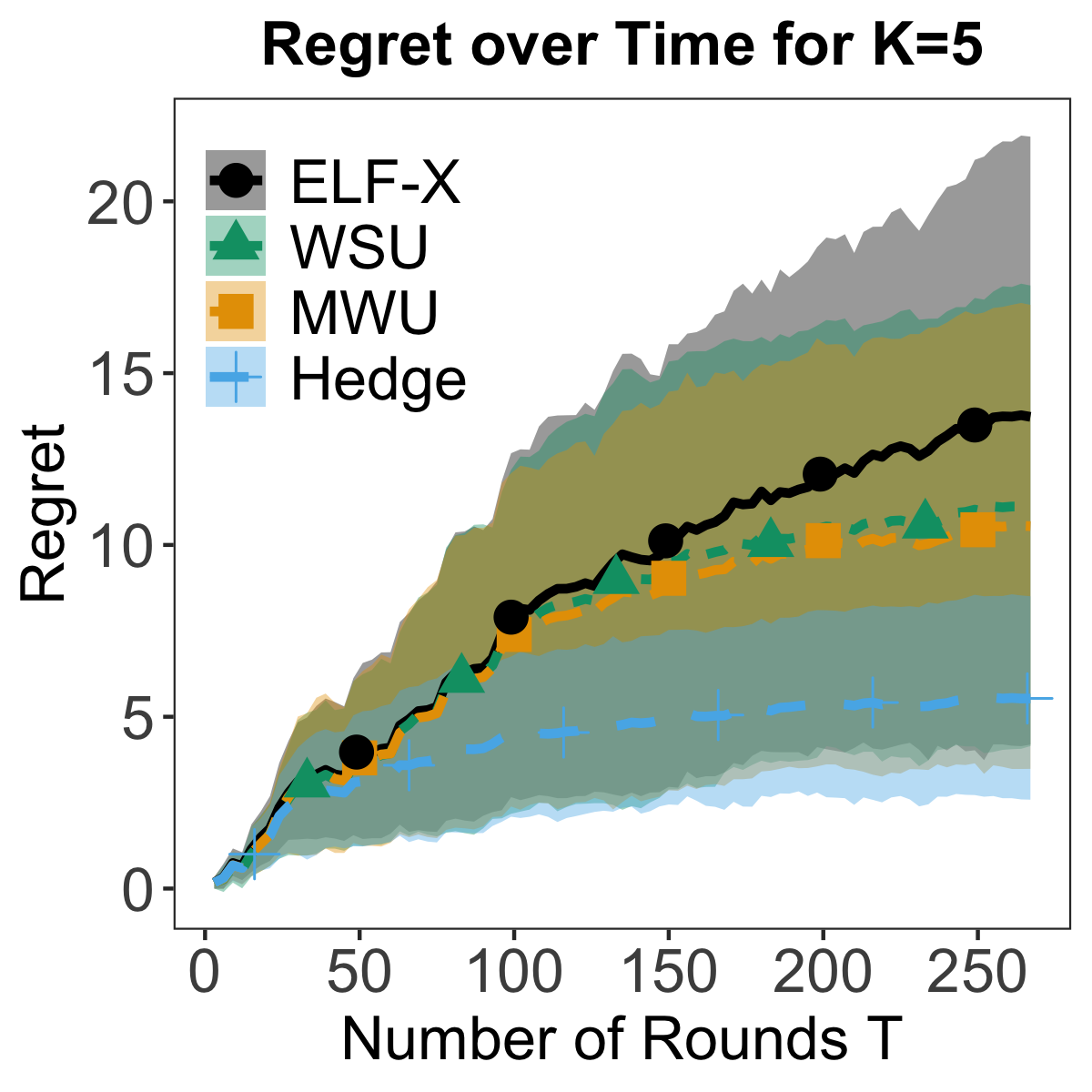}} \hfill
\subfigure{\includegraphics[width=0.3\textwidth]{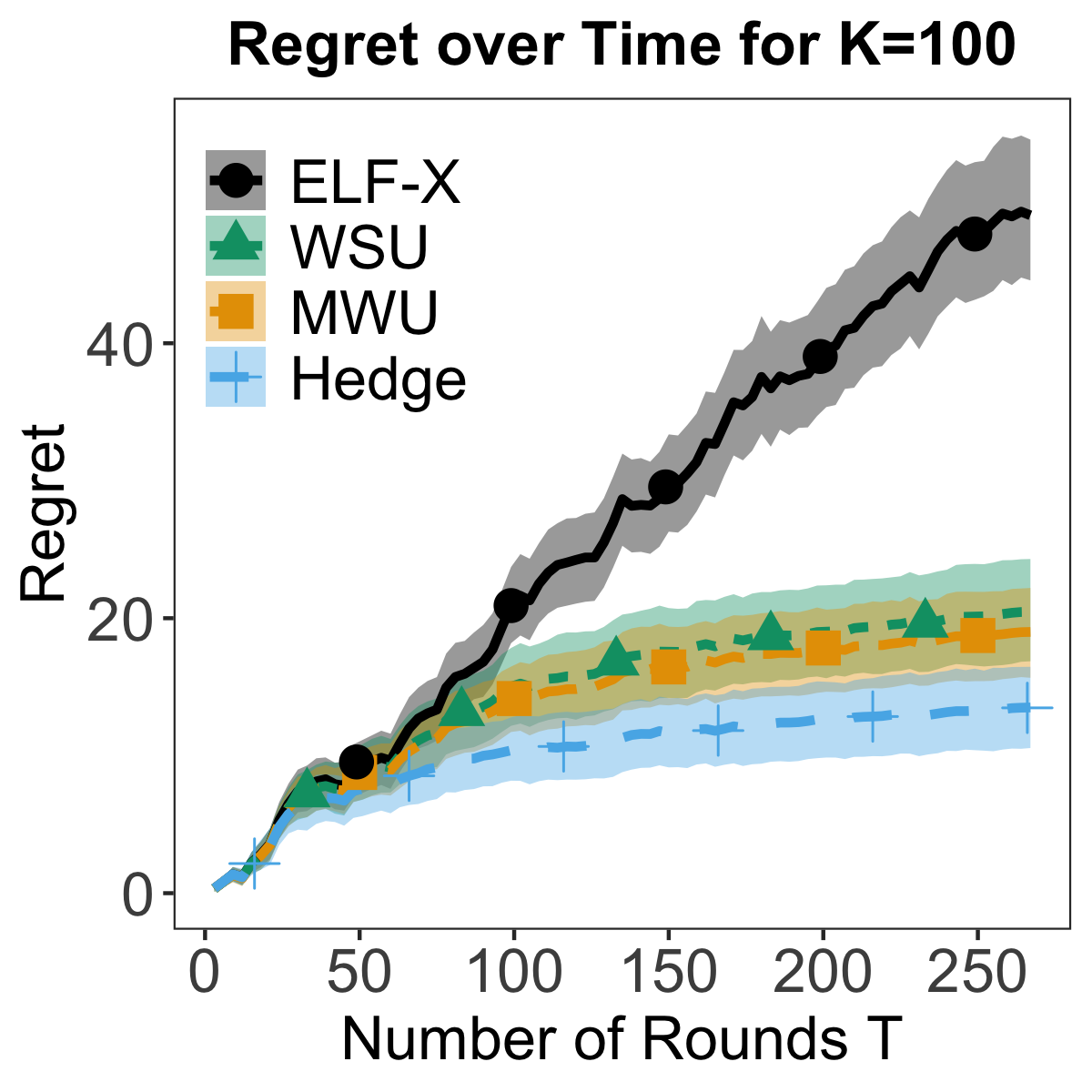}} \hfill
\subfigure{\includegraphics[width=0.3\textwidth]{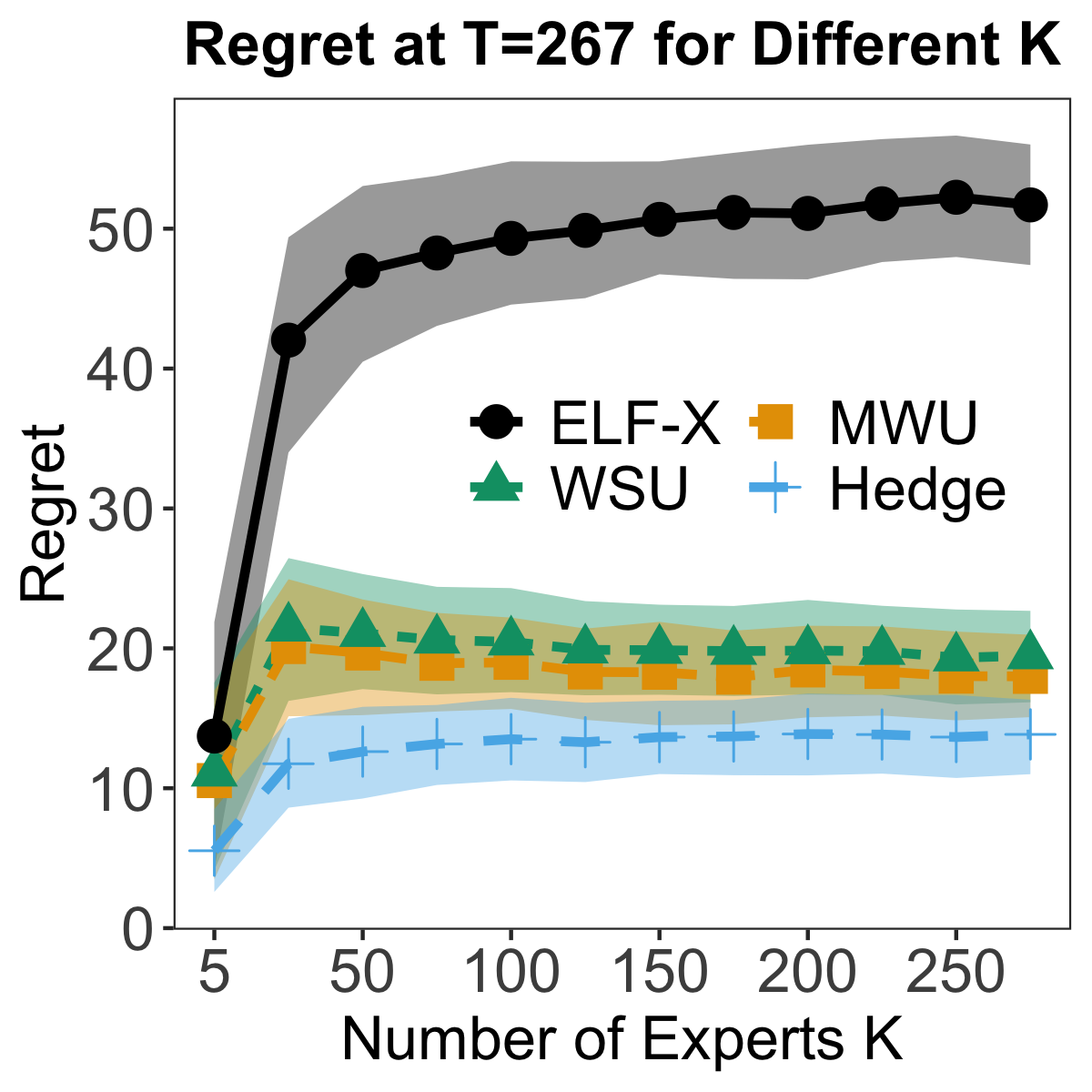}} \\  
\subfigure{\includegraphics[width=0.3\textwidth]{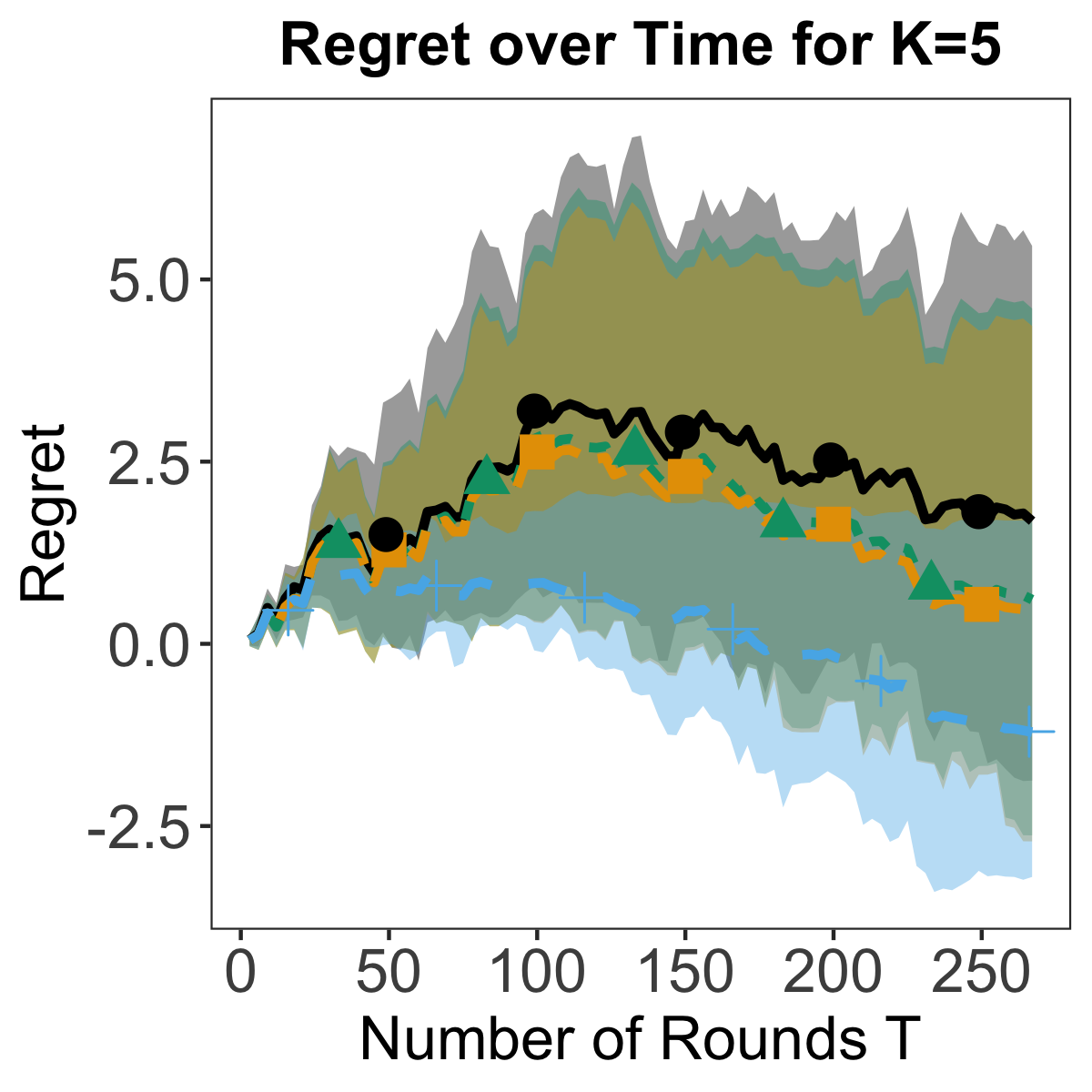}} \hfill
\subfigure{\includegraphics[width=0.3\textwidth]{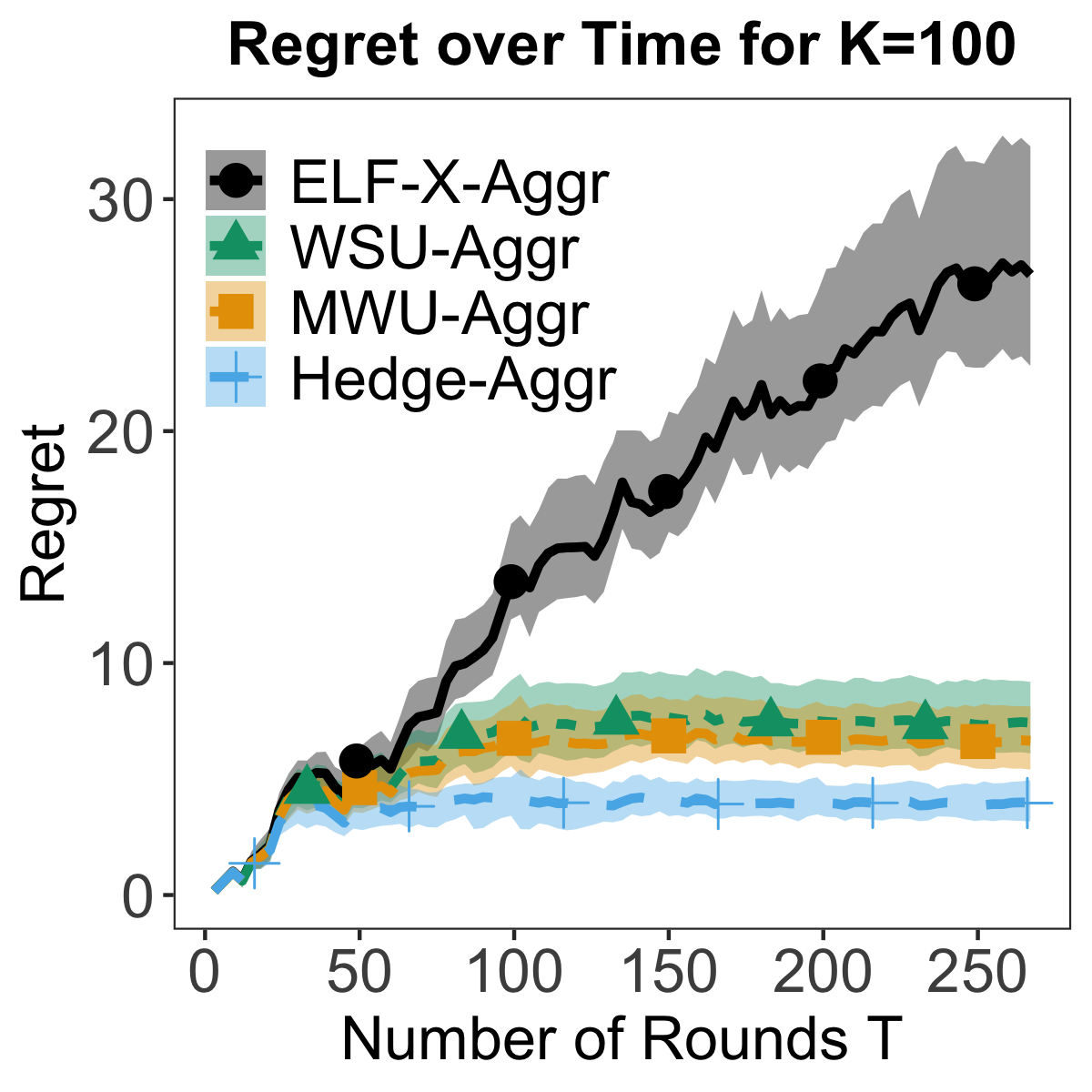}} \hfill
\subfigure{\includegraphics[width=0.3\textwidth]{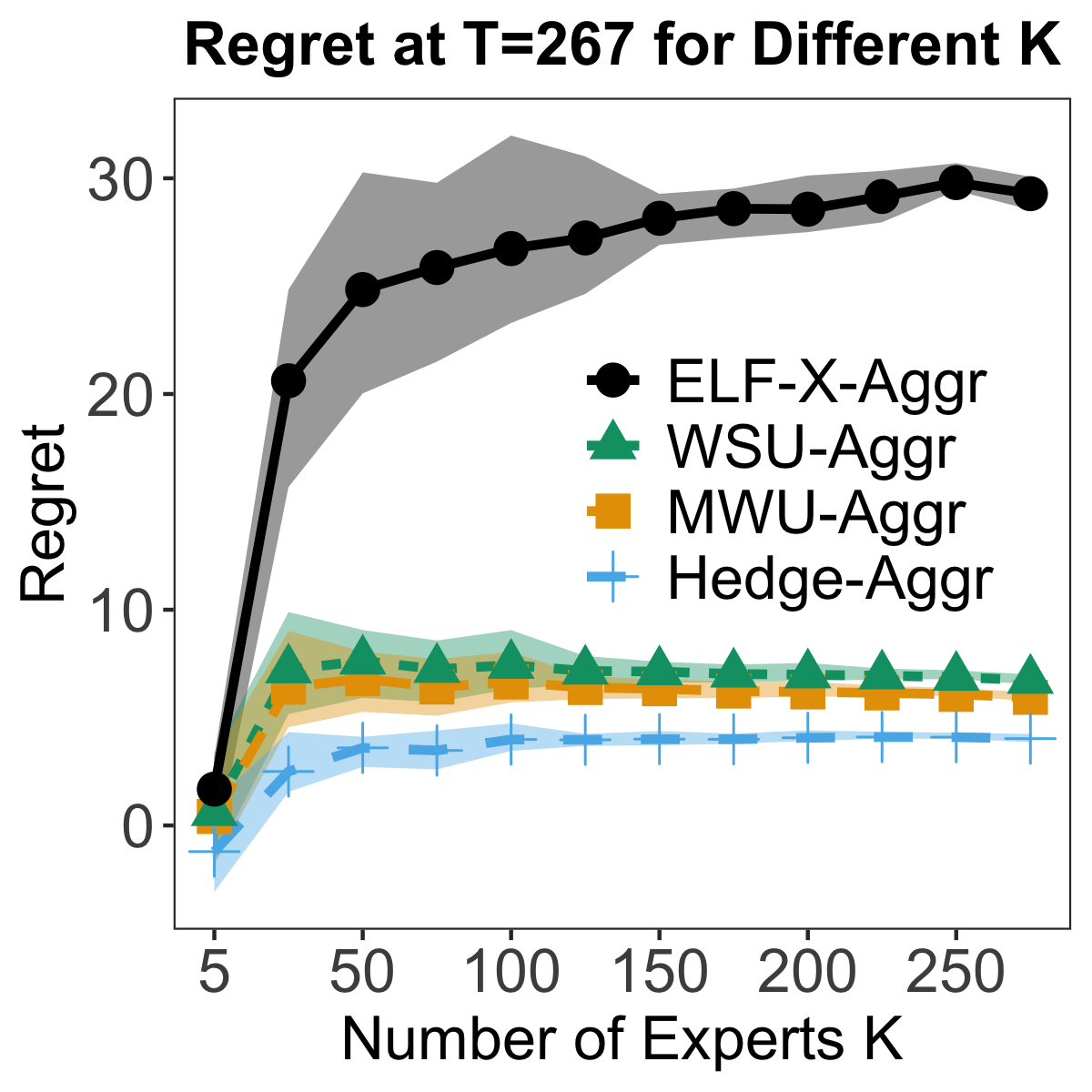}} \\ 
\subfigure{\includegraphics[width=0.3\textwidth]{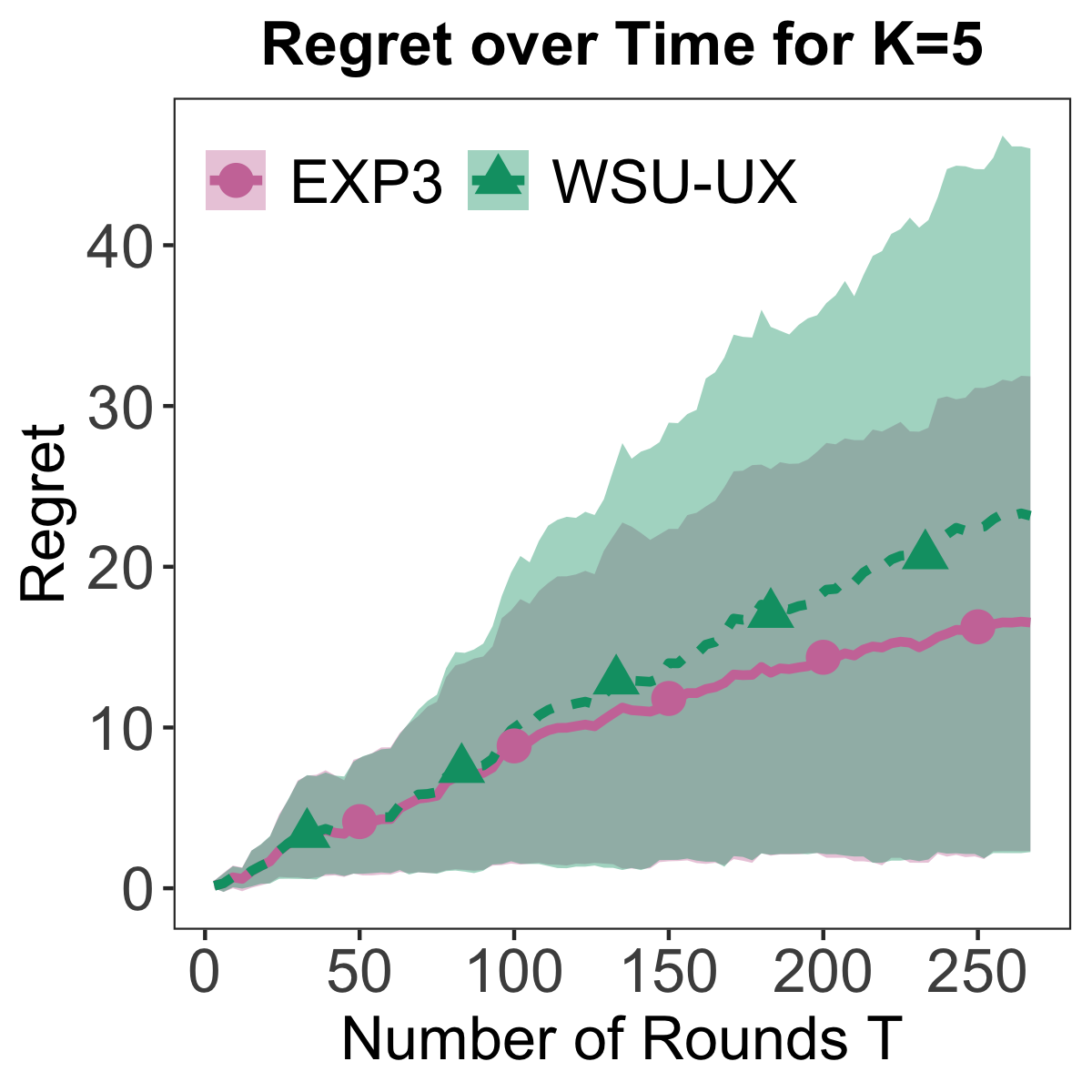}} \hfill
\subfigure{\includegraphics[width=0.3\textwidth]{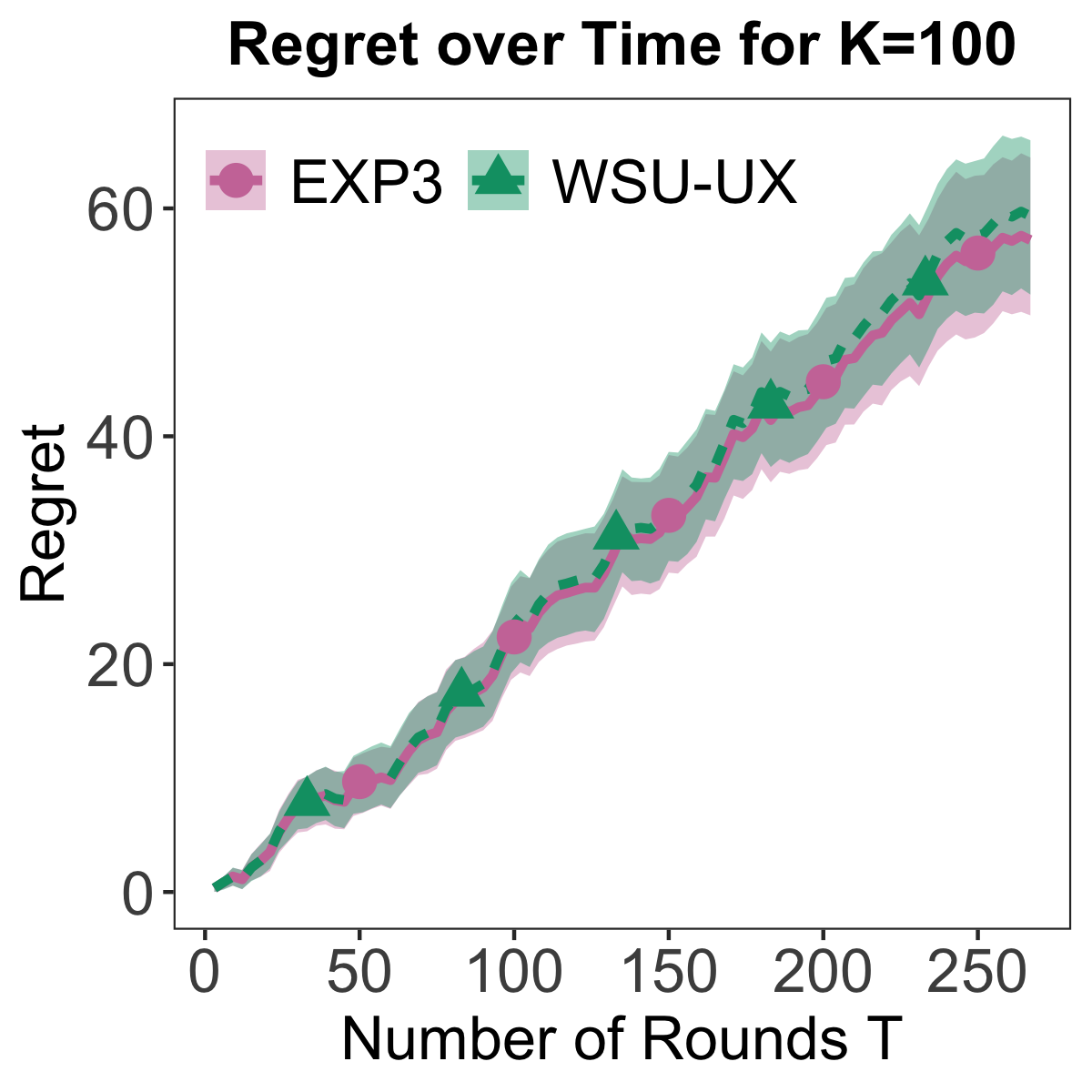}} \hfill
\subfigure{\includegraphics[width=0.3\textwidth]{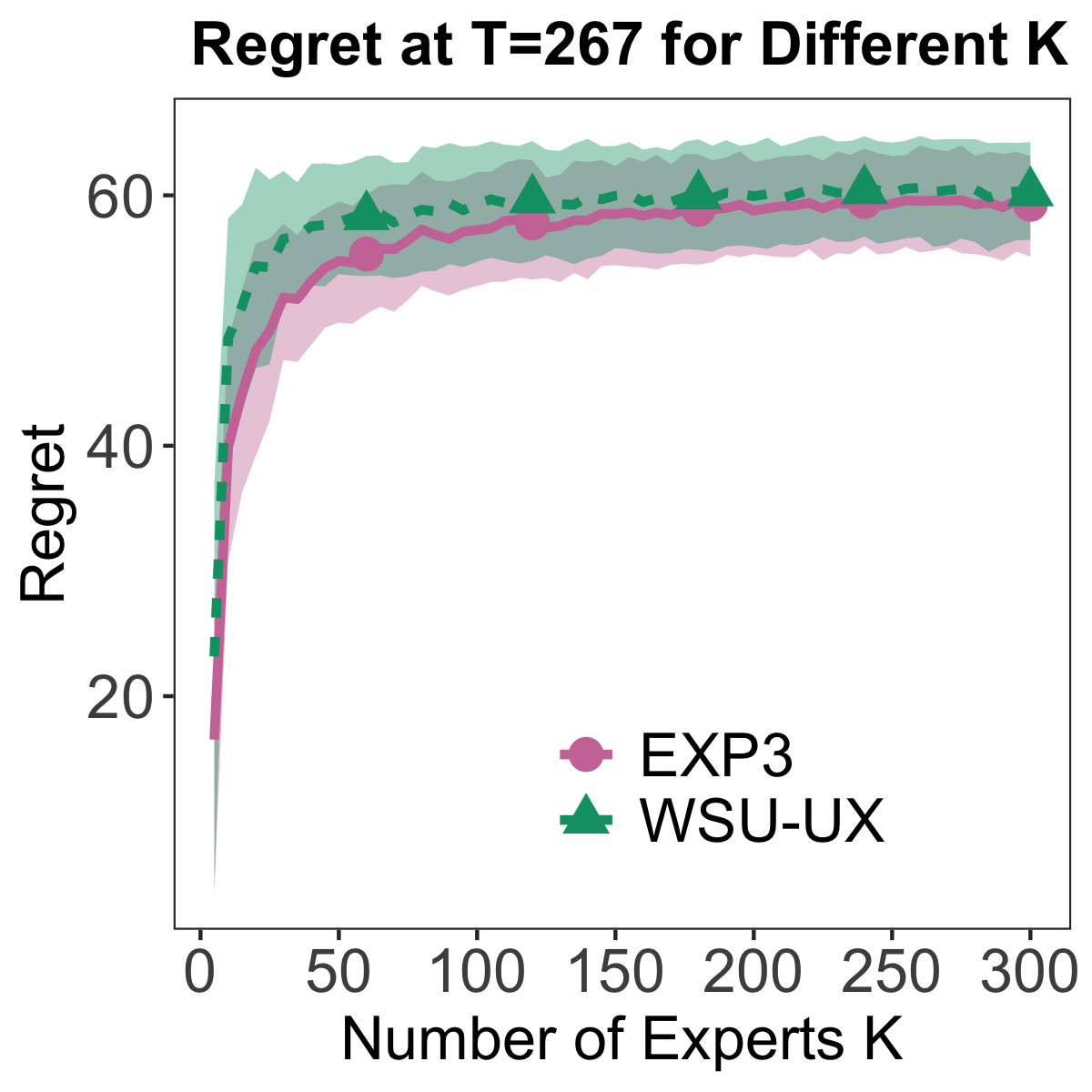}}
\caption{Comparisons on the 2019--2020 FiveThirtyEight NFL dataset. Top: Full-information setting with $\bp_t$ the prediction of a single expert chosen according to $\bpi_t$. Middle: Full-information setting with $\bp_t = \sum_{i \in [K]} \pi_{i,t}p_{i,t}$. Bottom: Partial information setting.}
\label{fig:nfl19-20}
\end{figure*}

We now provide a sketch proof of Theorem~\ref{thm:elfx-ic}, that $\elfx$ is incentive-compatible for forward-looking experts. For details, we refer the reader to~\citet{WFVPK18}.

\begin{proof}[Proof Sketch of Theorem~\ref{thm:elfx-ic}]
	Incentive compatibility rests on the fact that each expert maximizes his (subjective) probability of being selected as the event winner of any timestep $\tau$ by reporting $p_{i,\tau}=b_{i,\tau}$. This is because an expert's probability of being selected as the winner of event $\tau$ is exactly their payment from participating in a Weighted Score Wagering Mechanism where every expert has wager $1/K$. Further, it is easy to check that an expert $i$ \emph{minimizes} the probability of any other expert $j$ being selected as winner of timestep $\tau$ (according to $i$'s belief $b_{i,\tau}$).
	
	Fix the winners on all timesteps other than $\tau$. Because the winner at each timestep is chosen independently of all other timesteps, it is a dominant strategy for each expert to report his belief $b_{i,\tau}$. Incentive compatibility follows by applying this argument to all timesteps $\tau$. 
\end{proof}

\section{Supplementary Material for Section~\ref{sec:experiments}.}

\subsection{FiveThirtyEight NFL 2019--2020 Dataset}\label{app:1920}

In this subsection we present in Figure~\ref{fig:nfl19-20} the results of our experiments for the 2019--2020 FiveThirtyEight NFL dataset. The findings and conclusions are almost identical to those drawn using the 2018--2019 FiveThirtyEight NFL dataset found in Section~\ref{sec:experiments}.

\subsection{Monte Carlo Simulations with Large Horizon $T$}\label{app:monte-carlo}

In this subsection, we present our results for Monte Carlo simulations for larger horizons in Figure~\ref{fig:simulations}. We simulated the following setup: $K = 50$, $T = 2500$ and we repeated the simulations for $50$ repetitions. The lines correspond to average regret (across all repetitions), and the error bands in Figure~\ref{fig:simulations} correspond to the 20th and the 80th percentiles. 

The realized outcomes are sampled as follows: for rounds $0 \leq t \leq T/2$, $r_t \sim \Bern(0.4)$, and for rounds $T/2 + 1 \leq t \leq T$, $r_t \sim \Bern(0.6)$. 
The $K$ experts are randomly partitioned into three equal-sized groups sampling their beliefs from three different distributions: for experts in the first group we draw $b_{i,t} \sim \texttt{Unif}[0,0.7]$ for all rounds $t$, for the second group $b_{i,t} \sim \texttt{Unif}[0.3,1]$ for all $t$, and for the third group $b_{i,t} \sim \texttt{Unif}[0,1]$ for all $t$. 
As a result, in expectation, experts from the first group perform best for the first $T/2$ rounds, the second group performs best for the next $T/2$ rounds, while the third group performs best when all $T$ rounds are considered. 

\begin{figure*}[htbp]
\centering
\subfigure{\includegraphics[width=0.3\textwidth]{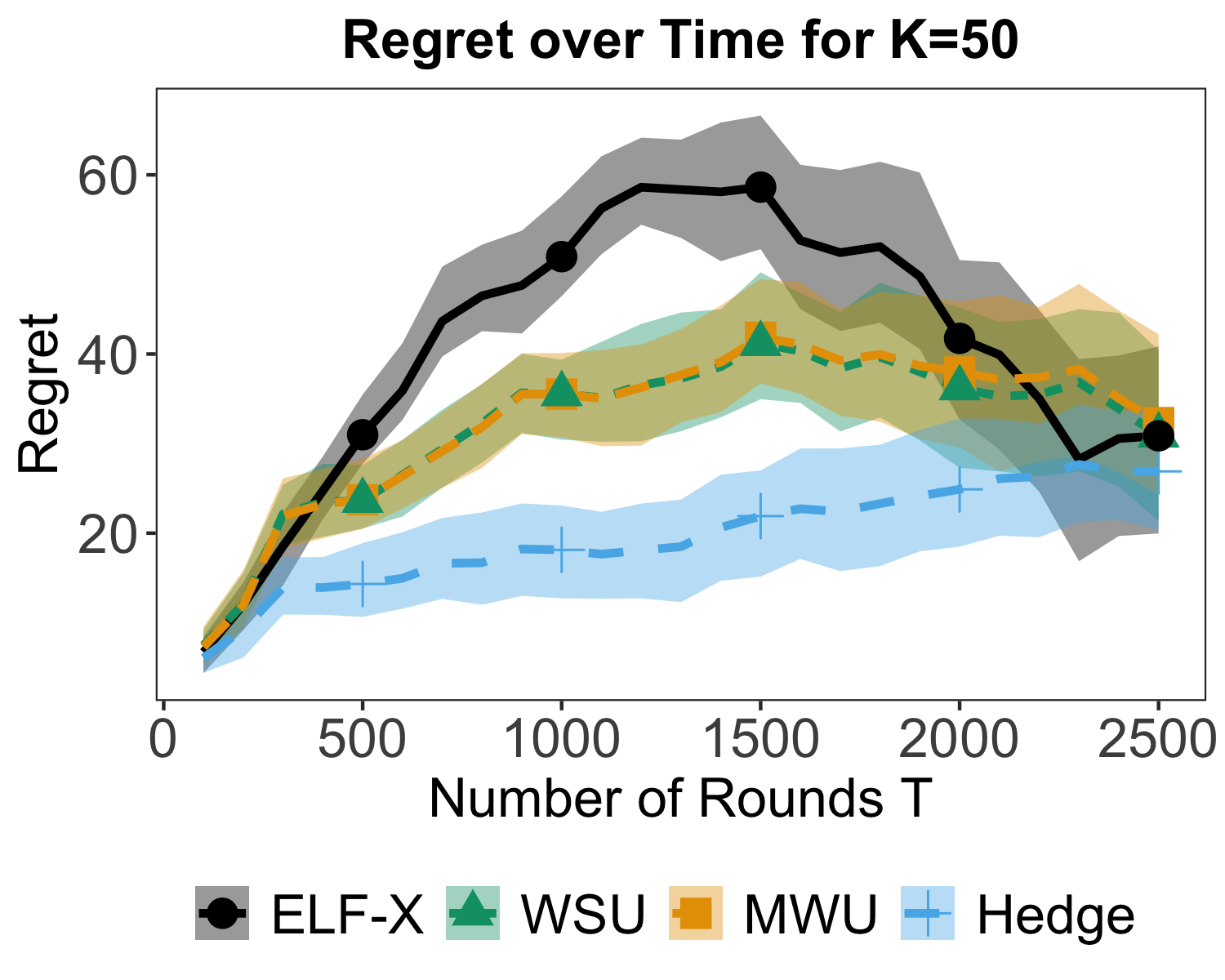}} \hfill
\subfigure{\includegraphics[width=0.3\textwidth]{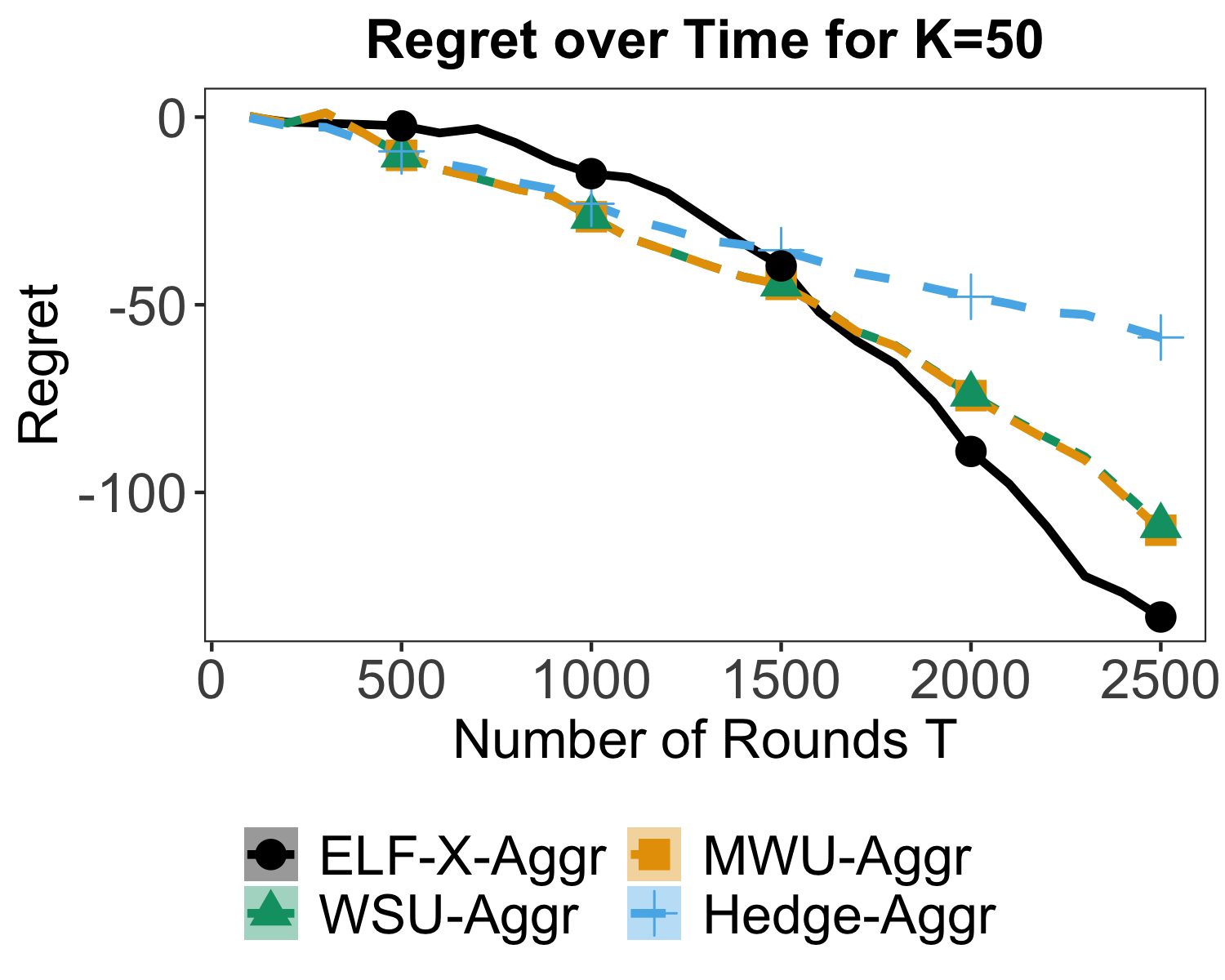}} \hfill
\subfigure{\includegraphics[width=0.3\textwidth]{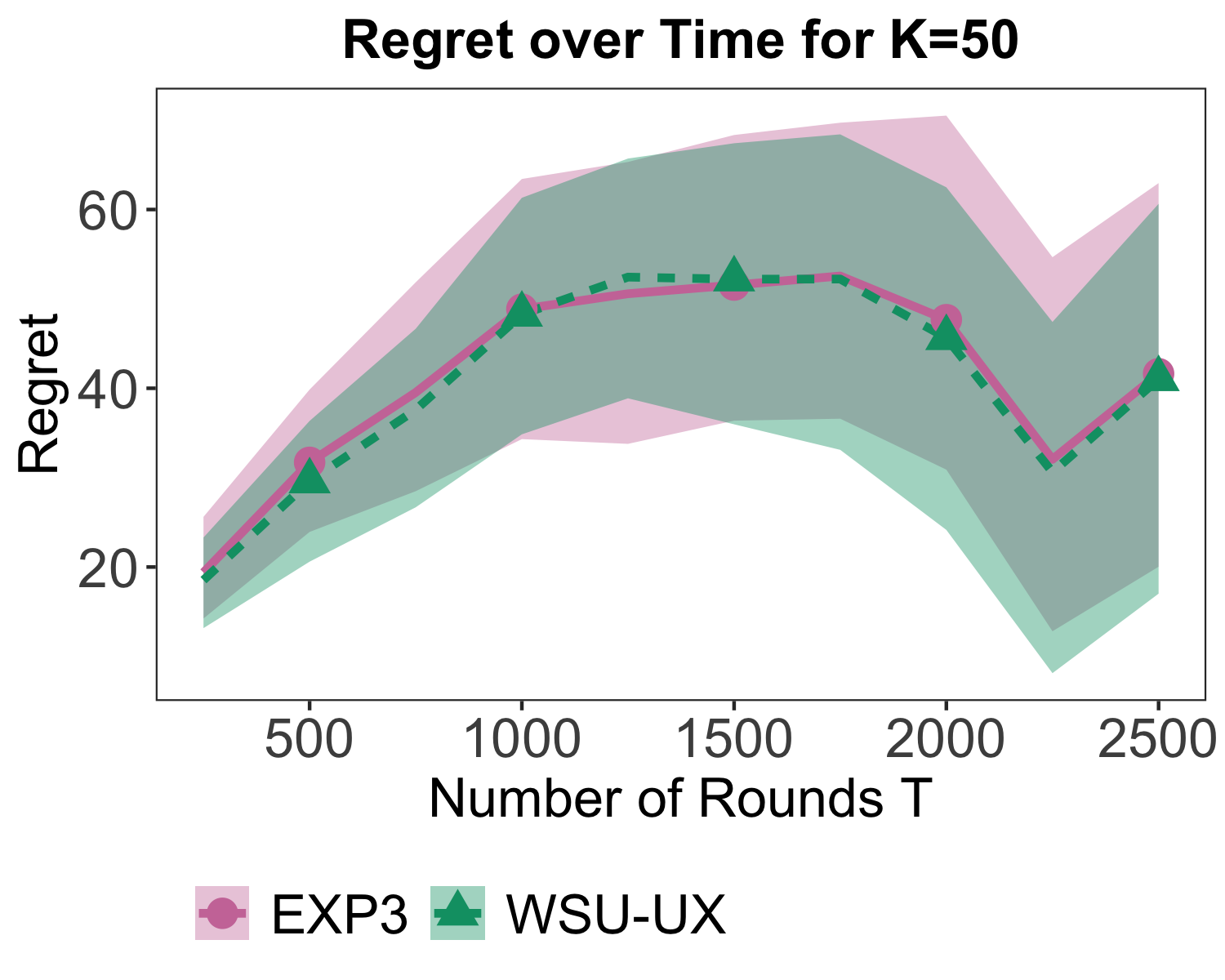}} \hfill
\caption{Simulation Results for $K=50$ experts. Left: Full-information setting with $\bp_t$ the prediction of a single expert chosen according to $\bpi_t$. Middle: Full-information setting with $\bp_t=\sum_{i \in [K]} \pi_{i,t}p_{i,t}$. Right: Partial information setting.}
\label{fig:simulations}
\end{figure*}%

Due to the way we constructed the simulation parameters, examining the performance of the algorithms for timesteps between $[0,T/2]$ provides intuition about their performance for settings where the experts' performance is relatively stable over time. However, their performance for timesteps between $[T/2, T]$ provides intuition for settings where the best expert is shifting over time. As a result, for timesteps between $[0,T/2]$ our findings are similar to the findings of our experiments on the FiveThirtyEight NFL datasets: $\elf$ performs worse than $\wsu$ (which performs identically to $\mwu$) and worse than $\hedge$, and $\wsux$ performs almost identically to $\texttt{EXP3}$ despite our weaker theoretical bound.  

Interestingly, for timesteps between $[T/2, T]$ we find that $\elf$ briefly performs better than $\wsu,\mwu$ and $\hedge$. We conjecture that this is due to the fact that $\elf$ in the first $T/2$ timesteps takes longer than $\mwu,\wsu$ and $\hedge$ to converge to experts in the first group. Because these experts are no longer optimal throughout the $T$ timesteps, $\elf$ has an advantage over the other algorithms. 

Lastly, we note that the regret performance of the aggregating variants of all algorithms is always negative due to the fact that the expectation over all experts is very close to issuing the optimal prediction for all rounds. As a result, a prediction that takes into account all of their predictions in a weighted fashion performs much better than the prediction of any fixed expert in hindsight. We also note that the fact that $\hedge$ is performing worse than the other algorithms is not contradicting the theoretical results, which are only stated in terms of \emph{worst case} upper bounds. Finally, we see that $\elfxagg$ performs better than \emph{all} algorithms in this setting. Explaining this phenomenon theoretically even for particular settings is a question of great interest.

\end{document}